 \documentclass[accepted]{uai2023} 

\usepackage[american]{babel}
\usepackage[linesnumbered,ruled,vlined]{algorithm2e}
\usepackage{relsize}
\usepackage{float}
\usepackage{multirow}
\usepackage{array}
\usepackage{caption}
\usepackage{lipsum}

\usepackage{natbib} 
    \bibliographystyle{plainnat}
    
\usepackage{mathtools} 
\usepackage{booktabs} 
\usepackage{tikz} 
\usepackage{amsmath}
\usepackage{amssymb}
\usepackage{mathtools}
\usepackage{amsthm}
\usepackage[capitalize,noabbrev]{cleveref}
\theoremstyle{plain}
\newtheorem{theorem}{Theorem}[section]

\newtheorem{lemma}[theorem]{Lemma}

\theoremstyle{definition}
\newtheorem{definition}[theorem]{Definition}

\theoremstyle{remark}

\newtheorem*{theorem1}{\textbf{Theorem~\ref{thm:lb_epehe}}}
\newtheorem*{theorem2}{\textbf{Theorem~\ref{thm:smiple_d1}}}
\newtheorem*{theorem3}{\textbf{Theorem~\ref{thm:simple_ipm}}}
\newtheorem*{lemma2}{\textbf{Lemma~\ref{lemma2}}}
\newtheorem*{theorem5}{\textbf{Theorem~\ref{main_theorem_1}}}

 \newcommand{\ind}{\perp\!\!\!\!\perp} 


\title{Transfer Learning for Individual Treatment Effect Estimation}

%
%
\author{Ahmed Aloui $^*$}
\author{Juncheng Dong $^*$}
\author{Cat P. Le}
\author{Vahid Tarokh}
\affil{
Department of Electrical and Computer Engineering, Duke University
}

\begin{document}
\maketitle

\def\thefootnote{*}\footnotetext{Equal Contribution.}

\begin{abstract}
This work considers the problem of transferring causal knowledge between tasks for Individual Treatment Effect (ITE) estimation. To this end, we theoretically assess the feasibility of transferring ITE knowledge and present a practical framework for efficient transfer. A lower bound is introduced on the ITE error of the target task to demonstrate that ITE knowledge transfer is challenging due to the absence of counterfactual information. Nevertheless, we establish generalization upper bounds on the counterfactual loss and ITE error of the target task, demonstrating the feasibility of ITE knowledge transfer. Subsequently, we introduce a framework with a new Causal Inference Task Affinity (CITA) measure for ITE knowledge transfer. Specifically, we use CITA to find the closest source task to the target task and utilize it for ITE knowledge transfer. Empirical studies are provided, demonstrating the efficacy of the proposed method. We observe that ITE knowledge transfer can significantly (up to 95\%) reduce the amount of data required for ITE estimation.
\end{abstract}


\section{Introduction}
Assessing the effects of treatments on people (i.e., the \emph{Individual Treatment Effect} (ITE) estimation)  is of significant interest to various research communities, such as those studying medicine and social policy making. In order to study the causal relationship between the outcome and the treatment, however, researchers must gather sufficient data samples from randomized control trials. This process can be both costly and time-consuming~\citep{vaccine}. To this end, it is desirable to utilize knowledge from different but closely related problems with \emph{transfer learning}. For instance, new vaccines must be developed for treatment when the viruses undergo mutation. Suppose the mutated viruses can be related to the known ones by a similarity measure. In that case, the effects of vaccine candidates can be quickly estimated based on this similarity with a small amount of data collected from the new scenario. Hence, this approach can notably accelerate the study.


While the recent progress in transfer learning is very promising~\citep{TLCVsurvey, TLNLPsurvey, TLsurvey2008, TLsurvey2021}, a major challenge for transferring causal knowledge arises from non-causal (spurious) correlations to which the statistical learning models are vulnerable. For example, a classifier may learn to use the background colors to differentiate images of camels and horses, as these objects are frequently depicted against different colored backgrounds~\citep{Arjovsky2019, geirhos2018, Beery2018}. In practice, the performance of the ITE estimation models can \emph{never} be evaluated because the counterfactual data is inaccessible, as shown in Figure~\ref{fig:causal_inference}. This problem is known in the literature as \textit{the fundamental problem of causal inference}~\citep{rubin1974,holland1986}. For instance, to compute the effect of vaccination on a person at some given time, that individual must both be administered the vaccine, and also remain unvaccinated, which is obviously absurd. This scenario is very different from the conventional supervised learning problems, where researchers often use a separate validation set in order to estimate the accuracy of the trained model.

The aforementioned challenge implies that much attention must be paid to selecting the appropriate source model in causal knowledge transfer. Additionally, similar scenarios to the given target task must be determined using a distance accounting for the \textit{immeasurable} counterfactual losses in scenarios under consideration. 
In this work, we first present a lower bound and a set of generalization bounds for transfer learning between causal inference tasks in order to demonstrate both the difficulty and viability of causal knowledge transfer. 
While these theoretical bounds are informative, a method is needed for selecting the optimal source model from multiple source tasks. This is discussed in Section~\ref{sec:task_distance}, where we introduce a framework endowed with a new task affinity, namely the Causal Inference Task Affinity (CITA), tailored explicitly for causal  knowledge transfer. This task affinity is used for selecting the ``closest'' source task. Subsequently its knowledge (e.g., trained models, source dataset) is utilized in the learning of the target task, as depicted in Figure~\ref{fig:intro_plot_1}. Our contributions are summarized below:

\begin{figure}[t]
\centering
\includegraphics[width=0.47\textwidth]{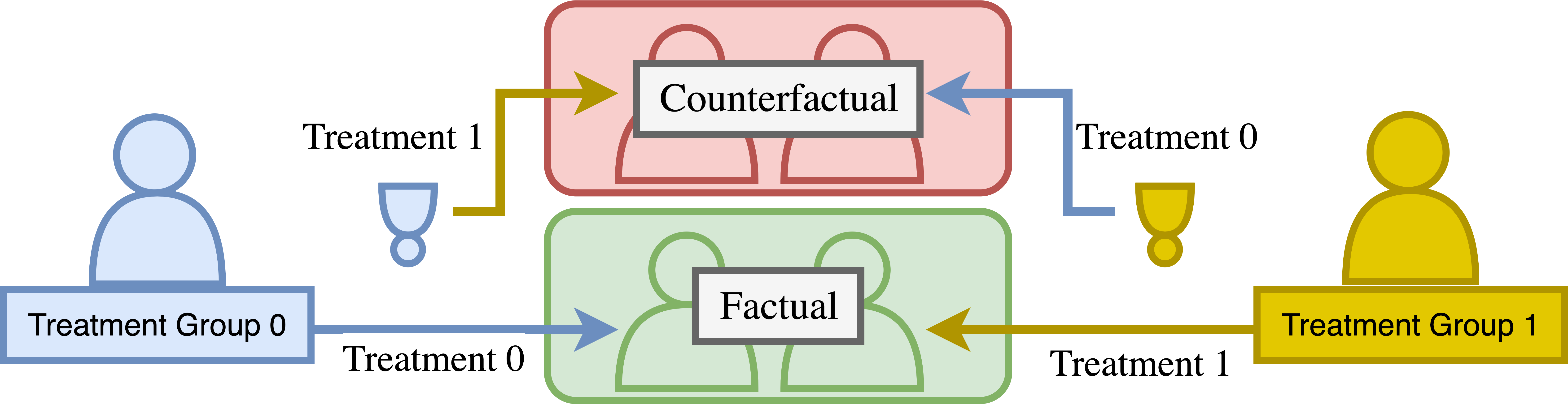}
\caption{Inaccessibility to counterfactual data (e.g., a parallel  universe where the treatments are reversed) makes transferring causal knowledge more challenging.}
\label{fig:causal_inference}
\end{figure}

\begin{enumerate}
\item We establish a new lower bound to demonstrate the challenges of transferring ITE knowledge. Additionally, we prove new regret bounds for learning the counterfactual outcomes and ITEs of the target tasks in causal transfer learning scenarios. These bounds demonstrate the feasibility of transferring ITE knowledge by stating that the error of any source model on the target task is upper bounded by quantifiable measures related to \textit{(i)} the performance of the source model on the source task and \textit{(ii)} the \textit{differences} between the source and the target causal inference tasks.

\item We introduce CITA, a task affinity for causal inference, which captures the symmetry of ITEs  (i.e., invariance to the relabeling of treatment assignments under the action of the symmetric group). Additionally, we provide theoretical (e.g., Theorem F.3) and empirical evidence to show that CITA is highly correlated with the counterfactuals loss, which is \textit{not measurable} in practice.

\item We propose an ITE estimation framework and a set of causal inference datasets \textit{suitable for learning causal knowledge transfer}. The empirical evidence on the above datasets demonstrates that our methods can estimate the ITEs for the target task with significantly fewer (up to 95\% reduction) data samples compared to the case where transfer learning is not performed.

\end{enumerate}

\section{Related Work}
\label{sec:related_work}
Many approaches in transfer learning
~\citep{Thrun1998, Blum1998, silver2008guest, Razavian:2014:CFO:2679599.2679731, finn2016deep, DBLP:journals/corr/FernandoBBZHRPW17, DBLP:journals/corr/RusuRDSKKPH16, 9054118} have been proposed, analyzed and applied in various machine learning applications. Transfer learning techniques inherently assume that prior knowledge in the selected source model helps with learning a target task~\citep{TLsurvey2008, TLsurvey2021}. In other words, these methods often do not consider the selection of the base task to perform knowledge transfer. Consequently, in some rare cases, transfer learning may even degrade the performance of the model~\cite{pmlr-v119-standley20a}. In order to avoid potential performance loss during knowledge transfer to a target task, \textit{task affinity} (or task similarity) is considered as a selection method that identifies a group of closest base candidates from the set of the prior learned tasks.  
Task affinity has been investigated and applied to various domains (e.g., transfer learning~\citep{zamir2018taskonomy, dwivedi2019, wang2019neural}, neural architecture search~\citep{le2021task, 9766163, 2021arXiv210300241L}, few-shot learning~\citep{pal2019zeroshot, le2022task}, multi-task learning~\citep{pmlr-v119-standley20a}, continual learning~\citep{kirkpatrick2017overcoming, chen2018coupled}). 

While transfer learning and task affinity have been investigated in numerous application areas, their applications to causal inference have yet to be thoroughly investigated.
Neyman-Rubin Causal Model~\citep{neyman1923,donald2005causal} and Pearl's Do-calculus~\citep{pearl} are popular frameworks for causal studies based on different perspectives. A central question in the Neyman-Rubin Causal Model framework is determining conditions for identifiability of causal quantities such as \textit{Average} and \textit{Individual Treatment Effects}. Previous work considered estimators for Average Treatment Effect based on various methods such as Covariate Adjustment \citep{rubin}, weighting methods such as those utilizing propensity scores \citep{propensity}, and Doubly Robust estimators \citep{doubly}. With the emergence of Machine Learning techniques, more recent approaches to causal inference include the applications of decision trees\citep{athey,athey2016recursive}, Gaussian Processes \citep{alaa}, and Generative Modeling \citep{ganite} to ITE estimation. In particular, deep neural networks have successfully learned ITEs and estimated counterfactual outcomes by data balancing in the latent domain \citep{johanson, shalit}. Please note that the transportation of causal graphs is another well-studied closely related field in the causality literature \citep{Bareinboim2021}. It studies transferring knowledge of causal relationships in Pearl's do-calculus framework. In contrast, in this paper, we are interested in transferring knowledge of ITE from a source task to a target task in the Neyman-Rubin framework using representation learning.
A closely related problem to ours is the domain adaptation problem for ITE estimation, as explored in ~\citep{bica2022transfer,vo2022adaptive,aglietti2020multi}. These works primarily focus on situations where only the distribution of populations changes, leaving the causal functions unaltered. In our research, we provide theoretical analysis and empirical studies for the case where both the population distributions and the causal mechanisms can change.

\section{Mathematical Background}\label{sec:math_bg}

\subsection{Causal Inference}
\label{causal}
 
Let $X \in \mathcal{X}\subset\mathbb{R}^{d}$ be the covariates (i.e., input features), $A \in \{0,\ldots,M\}$ be the treatment, and $Y \in \mathcal{Y} \subset \mathbb{R}$ be the factual (observed) outcome.  For every $j\in \{0,\ldots,M\}$ we define $Y_{j}$ to be the \textit{potential outcome} \citep{rubin1974} that would have been observed if only the treatment $A=j, \, j \in \{0,1, \cdots, M\}$ was assigned. 
In the medical context, for instance, $X$ is the individual information (e.g., weight, heart rate), $A$ is the treatment assignment (e.g., $A=0$ if the individual did not receive a vaccine, and $A=1$ if the individual is vaccinated), $Y$ is the outcome (e.g., mortality data). A \textit{causal inference dataset} is a collection of factual observations $D_{F} = \{(x_{i},a_{i}),y_{i}\}_{i=1}^{N}$, where $N$ is the number of samples. We assume these samples are independently drawn from the same factual distribution $p_F$. In a parallel universe, if the roles of the treatment and control groups were reversed, we would have observed a different set of samples $D_{CF}$ sampled from the counterfactual distribution $p_{CF}$.
In this work, we present our results for the binary case, i.e., $M=1$. However, our approach can be easily extended to any positive integer $M < \infty$. In the binary case, the individuals who received treatments $A=0$ and $A=1$ are respectively denoted by the control and treatment groups.

\begin{definition}[ITE]
The Individual Treatment Effect (ITE), referred to as the Conditional Average Treatment Effect (CATE) \citep{imbensbook}, is defined as:
\begin{align}
\label{ITE}
    \forall x \in \mathcal{X}, \; \tau(x) = \mathbb{E}[{Y_{1}-Y_{0}|X=x}]    
\end{align}
\end{definition}

We assume that the data generation process respects the \textit{overlap}, i.e. $\forall x \in \mathcal{X}, 0<p(a=1|x)<1$, and \textit{conditional unconfoundedness}, i.e. $(Y^{1}, Y^{0}) \perp \!\!\! \perp A |X$ \citep{robins1987}. These assumptions are sufficient conditions for the ITE to be identifiable \citep{imbens}. We also assume that the true causal relationship is described by a function $f(x,a)$, which can be expressed as an expected value in the non-deterministic case. 
By definition $\tau(x)=f(x,1)-f(x,0)$. Let $\hat f(x,a)$ denote a hypothesis that estimates the true function $f(x,a)$. Thus, the ITE function can then be estimated as $\hat{\tau}(x) = \hat{f}(x,1) - \hat{f}(x,0)$. We use $\ell_{\hat{f}}(x,a,y)$ to denote a loss function that quantifies the performance of $\hat{f}(\cdot, \cdot)$. A possible example is the $L^2$ loss defined as $\ell_{\hat{f}}(x,a,y)=(y-\hat{f}(x,a))^2$.

\begin{definition}[Factual Loss]
For a hypothesis $\hat{f}$ and a loss function $l_{\hat{f}}$, the factual loss is defined as:
\begin{equation}
\epsilon_{F}(\hat{f}) = \int_{\mathcal{X}\times\{0,1\}\times\mathcal{Y}} l_{\hat{f}}(x,a,y)\; p_F(x,a,y) dx dady
\end{equation}

We also define the factual loss for the treatment ($a=1$) and control ($a=0$) groups respectively as:
\begin{equation}
\epsilon_{F}^{a=1}(\hat{f}) = \int_{\mathcal{X}\times\mathcal{Y}} l_{\hat{f}}(x,1,y)\; p_F(x,y|a=1) dx dy
\end{equation}
and 
\begin{equation}\epsilon_{F}^{a=0}(\hat{f}) = \int_{\mathcal{X}\times\mathcal{Y}} l_{\hat{f}}(x,0,y)\; p_F(x,y|a=0) dx dy
\end{equation}
\end{definition}

\begin{definition}[Counterfactual Loss]
The counterfactual loss is defined as:
\begin{equation}
\epsilon_{CF}(\hat{f}) = \int_{\mathcal{X} \times\{0,1\}  \times\mathcal{Y}} l_{\hat{f}}(x,a,y)\; p_{CF}(x,a,y) dx da dy
\end{equation}

We also define the counterfactual loss for the treatment ($a=1$) and control ($a=0$) groups respectively as:
\begin{equation}
\epsilon_{CF}^{a=1}(\hat{f}) = \int_{\mathcal{X}\times\mathcal{Y}} l_{\hat{f}}(x,1,y)\; p_{CF}(x,y|a=1) dx dy
\end{equation}
and 
\begin{equation}
\epsilon_{CF}^{a=0}(\hat{f}) = \int_{\mathcal{X}\times\mathcal{Y}} l_{\hat{f}}(x,0,y)\; p_{CF}(x,y|a=0) dx dy
\end{equation}
\end{definition}

The counterfactual loss corresponds to the expected loss value in a  parallel universe where the roles of the control and treatment groups are exchanged.
\begin{definition}
The \textit{Expected Precision in Estimating Heterogeneous Treatment Effect} (PEHE) is defined as: 
\begin{equation}
     \varepsilon_{P E H E}(\hat{f})=\int_{\mathcal{X}}\left(\hat{\tau}(x)-\tau(x)\right)^2 p_F(x) d x.
\end{equation}
\end{definition}
Here, $\varepsilon_{PEHE}$~\citep{hill} is often used as the performance metric for estimation of ITEs~\citep{shalit,johanson}. A critical connection between the factual loss ($\epsilon_F$), the counterfactual loss ($\epsilon_{CF}$), and $\varepsilon_{PEHE}$ is that for small values of $\epsilon_F$ and $\epsilon_{CF}$  causal models have good performance (i.e., low $\varepsilon_{PEHE}$). However, the $\varepsilon_{PEHE}$ is not directly accessible in causal inference scenarios because the calculation of $\tau(x)$ (i.e., the ground truth ITE values) requires access to the counterfactual values. In this light, we choose a hypothesis that instead optimizes an upper bound of $\varepsilon_{PEHE}$ given in Equation~\ref{opt_obj}.

\subsection{Representation Learning for ITE Estimation}
In this work, we consider The TARNet model~\cite{shalit} for causal learning. TARNet was developed as a framework to estimate ITEs using counterfactual balancing. It consists of a pair of functions $(\Phi,h)$ where $\Phi: \mathbb{R}^{d}\rightarrow \mathbb{R}^{l}$ is a representation learning function, and $h:\mathbb{R}^{l}\times \{0,1\}\rightarrow \mathbb{R}$ is a function learning the two potential outcomes functions in the representation space. The hypothesis learning for the true causal function is $\hat{f}(x, a) = h(\Phi(x), a)$ and the loss function $\ell_{\hat{f}}$ is denoted by $\ell_{(\Phi,h)}$. To ensure the similarity between the features of the treatment group and that of the control group in the representation space, TARNet uses the \textit{Integral Probability Metric} in order to measure the distance between distributions, defined as:
\begin{equation}
\label{ipm}
\underset{G}{\text{IPM}}(p, q):=\sup _{g \in G}\left|\int_{S} g(s)(p(s)-q(s)) d s\right|
\end{equation}
where the supremum is taken over a given class of functions $G$.
It follows from the Kantorovich-Rubinstein duality \cite{villani} that $\text{IPM}$ reduces to the 1-Wassertein distance when $G$ is the set of 1-Lipschtiz functions as is the case in our numerical experiments. Here, the TARNet model learns to estimate the potential outcomes by minimizing the following objective: 
\begin{multline}
\label{opt_obj}
\mathcal{L}(\Phi,h) =  \frac{1}{N} \sum_{i=1}^N w_{i} \cdot \ell_{(\Phi,h)}(x_i,a_{i},y_{i})\\+\alpha \cdot \underset{G}{\text{IPM}}\left(\left\{\Phi\left(x_i\right)\right\}_{i: a_i=0},\left\{\Phi\left(x_i\right)\right\}_{i: a_i=1}\right)
\end{multline}

where $\displaystyle w_{i}=\frac{a_i}{2 v}+\frac{1-a_i}{2(1-v)}$, $\displaystyle v=\frac{1}{N} \sum_{i=1}^N a_i$, and $\alpha$ is the \textit{balancing weight} which controls the trade-off between the similarity of the representations in the latent domain and the model's performance on the factual data.

\section{Theoretical Framework}\label{sec:theory}
In this section, we provide learning bounds on the counterfactual loss of the target task, and $\varepsilon_{PEHE}$ (i.e., the error in estimating ITE). These bounds are inspired by the work of \cite{Ben-David2010} in the non-causal setting. We use superscripts $T$ and $S$ to respectively denote quantities related to the target and source tasks. Let $\tau^{T}$ denote the individual treatment effect function of the target task. We consider the performance of a well-trained source model $\hat{f}^{S}:\mathcal{X}\times \{0,1\} \to \mathcal{Y}$ when applied to a target task:
\begin{multline}
    \varepsilon^{T}_{PEHE}(\hat{f}^S) = \\
    \underset{x \sim p_{F}^{T}}{\mathbb{E}}\left[\left(\tau^{T}(x)-[\hat{f}^{S}(x,1) - \hat{f}^{S}(x,0)]\right)^2\right]
\end{multline}


\subsection{The Challenge of ITE Knowledge Transfer}
We first provide a lower bound on $\varepsilon_{PEHE}$ that consists of both the factual and the counterfactual losses. This bound implies that good performance on the counterfactual data is a \textit{necessary} condition for accurate estimation of ITE. 

\begin{theorem}
\label{thm:lb_epehe}
Let $\hat{f}^{S}$ be a model trained on a source task, and $u = p^{T}_{F}(A=1) $ then
\begin{align}
    \epsilon^{T}_{F}(\hat{f}^{S}) + u \epsilon^{T,a=0}_{CF}(\hat{f}^{S}) \leq \varepsilon_{PEHE}^{T}(\hat{f}^{S})
\end{align}

\end{theorem}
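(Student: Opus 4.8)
The plan is to turn $\varepsilon_{PEHE}^T(\hat{f}^S)$ into a $p_F^T$-average of a squared difference of potential-outcome errors, expand it over the treated and control sub-populations, and then match the resulting pieces to the factual and counterfactual group losses; unlike the later generalization bounds, this lower bound is a direct calculation rather than a domain-adaptation-style argument. Introduce the pointwise errors $e_a(x):=f^T(x,a)-\hat{f}^S(x,a)$ for $a\in\{0,1\}$. Since $\tau^T(x)=f^T(x,1)-f^T(x,0)$ and $\hat{f}^S(x,1)-\hat{f}^S(x,0)$ is the source model's ITE estimate, the PEHE integrand equals $(e_1(x)-e_0(x))^2$, so
\[
\varepsilon_{PEHE}^T(\hat{f}^S)=\int_{\mathcal X}\bigl(e_1(x)-e_0(x)\bigr)^2 p_F^T(x)\,dx,
\]
which, splitting $p_F^T(x)=u\,p_F^T(x\mid A=1)+(1-u)\,p_F^T(x\mid A=0)$ with $u=p_F^T(A=1)$, becomes $u\,\mathbb{E}_{p_F^T(x\mid A=1)}[(e_1-e_0)^2]+(1-u)\,\mathbb{E}_{p_F^T(x\mid A=0)}[(e_1-e_0)^2]$.

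Next I would re-express the three loss terms on the left-hand side through the same errors. For the $L^2$ loss, integrating out the outcome gives $\int_{\mathcal Y}\ell_{\hat{f}^S}(x,a,y)\,p_F^T(y\mid x,a)\,dy=\mathrm{Var}(Y_a\mid X=x)+e_a(x)^2$, which is $e_a(x)^2$ in the noiseless regime assumed for $f$ (the variance term is nonnegative and only makes the left-hand side larger, hence strengthens the claim). Conditional unconfoundedness is what lets me replace $\mathbb{E}[Y_0\mid X=x,A=1]$ by $f^T(x,0)$, so that the counterfactual control-group loss over the treated population becomes $\epsilon_{CF}^{T,a=0}(\hat{f}^S)=\mathbb{E}_{p_F^T(x\mid A=1)}[e_0(x)^2]$, while $\epsilon_F^{T,a=1}(\hat{f}^S)=\mathbb{E}_{p_F^T(x\mid A=1)}[e_1(x)^2]$ and $\epsilon_F^{T,a=0}(\hat{f}^S)=\mathbb{E}_{p_F^T(x\mid A=0)}[e_0(x)^2]$. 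With $\epsilon_F^T=u\,\epsilon_F^{T,a=1}+(1-u)\,\epsilon_F^{T,a=0}$, the left-hand side of the theorem equals $u\,\mathbb{E}_{p_F^T(x\mid A=1)}[e_1^2+e_0^2]+(1-u)\,\mathbb{E}_{p_F^T(x\mid A=0)}[e_0^2]$. Comparing with the two-block form of the PEHE, the claim reduces to: $(e_1-e_0)^2$ should dominate $e_1^2+e_0^2$ in $p_F^T(\cdot\mid A=1)$-average and dominate $e_0^2$ in $p_F^T(\cdot\mid A=0)$-average. Equivalently — and this is the form I would actually verify — a one-line regrouping gives the exact identity $\varepsilon_{PEHE}^T(\hat{f}^S)=\epsilon_F^T(\hat{f}^S)+u\,\epsilon_{CF}^{T,a=0}(\hat{f}^S)+(1-u)\,\epsilon_{CF}^{T,a=1}(\hat{f}^S)-2\,\mathbb{E}_{p_F^T(x)}[e_1(x)e_0(x)]$, so the theorem is equivalent to $2\,\mathbb{E}_{p_F^T(x)}[e_1(x)e_0(x)]\le(1-u)\,\epsilon_{CF}^{T,a=1}(\hat{f}^S)$.

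The main obstacle is precisely this last inequality: it controls the cross term between the source model's errors on the two potential outcomes, and it is not automatic — it holds, for instance, whenever $e_1(x)e_0(x)\le 0$ pointwise (or merely in $p_F^T(\cdot\mid A=a)$-average for each $a$), i.e.\ when the source model's errors on $Y_0$ and $Y_1$ are not positively aligned, which is the regime where the bound is tight and informative for ITE transfer. So I would either adopt this ``no positive alignment'' condition as an explicit hypothesis on the trained source model, or, if the bound is intended unconditionally, carry the cross terms through the treated/control split and show that the omitted surplus $(1-u)\,\epsilon_{CF}^{T,a=1}(\hat{f}^S)$ absorbs them. Everything else — the change of measure via unconfoundedness, the bias/variance split of the $L^2$ loss, and reassembling the weighted blocks — is routine bookkeeping.
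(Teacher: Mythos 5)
Your route is the paper's route. The paper expands $\varepsilon^{T}_{PEHE}(\hat{f}^{S})$ as the $p^{T}_{F}$-average of the squared difference of the two potential-outcome errors, splits each squared error over the treated and control sub-populations, uses $p^{T}_{F}(x\mid A=1)=p^{T}_{CF}(x\mid A=0)$ to recognize the four group losses, and obtains the stated bound after discarding the nonnegative surplus $(1-u)\,\epsilon^{T,a=1}_{CF}(\hat{f}^{S})$ --- exactly your bookkeeping. The single point of divergence is the cross term, and that is precisely where the paper's proof does its only nontrivial work: it asserts $\mathbb{E}_x\bigl[\mathbb{E}\bigl[(\hat{f}^{S}(x,1)-Y^{T}_1)(Y^{T}_0-\hat{f}^{S}(x,0))\mid X=x\bigr]\bigr]=0$, citing unconfoundedness ($Y_0$ and $Y_1$ conditionally independent given $X$).

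Your refusal to take that step is not a gap on your side; it is the weak point of the argument. Conditional independence only kills the conditional covariance of $(Y_0,Y_1)$; the conditional expectation above then equals the product of conditional biases $(\hat{f}^{S}(x,1)-f^{T}(x,1))(f^{T}(x,0)-\hat{f}^{S}(x,0))$, i.e.\ (up to sign) your $e_1(x)e_0(x)$, which has no reason to integrate to zero. Your reduction of the theorem to $2\,\mathbb{E}_{p^{T}_{F}}[e_1e_0]\le(1-u)\,\epsilon^{T,a=1}_{CF}(\hat{f}^{S})$ is exact, and some condition of the kind you propose cannot be dropped: taking $\hat{f}^{S}=f^{T}+c$ with a constant $c\neq 0$ gives $\varepsilon^{T}_{PEHE}=0$ while the left-hand side is $(1+u)c^{2}>0$, so the inequality fails when the errors on $Y_0$ and $Y_1$ are positively aligned --- the paper's ``$=0$'' step silently rules this out. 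One minor correction to your write-up: the outcome-variance term in the $L^2$ loss enlarges the left-hand side and therefore makes the inequality \emph{harder}, not stronger in your favor; but since both you and the paper identify the group losses with squared biases (the noiseless reading), this does not affect the comparison.
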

According to the bound in Theorem~\ref{thm:lb_epehe}, simply minimizing the factual loss of the target may not guarantee a good performance. Hence, choosing a source model with low (or zero) factual loss on the target task cannot perform well if the (immeasurable) counterfactual loss of the target becomes excessively high. In other words, the performance of the chosen source model can be arbitrarily inadequate, while its performance appears perfect on factual data. 

While Theorem~\ref{thm:lb_epehe} has implied that causal knowledge cannot be transferred without any assumption, the learning bounds presented in the following section prove the viability of transferring causal knowledge under reasonable assumptions.

\subsection{General Learning Bounds}
The problem of ITE knowledge transfer can be expressed as two triples $(p^{S}_{F},p^{S}_{CF},f^{S})$ and $(p^{T}_{F},p^{T}_{CF},f^{T})$ where:
\begin{itemize}
    \item $p^{S}_{F}$ and $p^{T}_{F}$ respectively denote the factual probability distribution of the source and target tasks. 
    \item $p^{S}_{CF}$ and $p^{T}_{CF}$  respectively denote the counterfactual distribution of the source and target tasks.
    \item $f^{S}$ and $f^{T}$ respectively denote the underlying causal function of the source task and the target task.
\end{itemize}
We use the $L_1$ distance to measure the similarity between probability distributions, defined as: 
\begin{equation}
    V(p,q) = \int_{\mathcal{S}} |p(s) - q(s)| ds.
\end{equation}
 
 \begin{theorem}
\label{thm:smiple_d1}
For any hypothesis $\hat{f}$, we have:
\begin{align}
    \begin{aligned}
        \epsilon^{T}_{CF}(\hat{f}) \leq & \epsilon^{S}_{F}(\hat{f}) + 
        V(p^{T}_{F},p^{S}_{F}) + V(p^{T}_{F},p^{T}_{CF}) \\ & + \underset{{(x,a) \sim p^{S}_{F}}}{\mathbb{E}}[|f^{S}(x,a) - f^{T}(x,a)|]  
    \end{aligned}
\end{align}
and
\begin{align}
    \begin{aligned}
        \varepsilon^{T}_{PEHE}(\hat{f}) \leq & 4 \epsilon^{S}_{F}(\hat{f}) + 
        4 V( p^{T}_{F},p^{S}_{F}) + 2 V(  p^{T}_{F},p^{T}_{CF}) \\ & + 4 \underset{{(x,a) \sim p^{S}_{F}}}{\mathbb{E}}[|f^{S}(x,a) - f^{T}(x,a)|]  
    \end{aligned}
\end{align}

\end{theorem}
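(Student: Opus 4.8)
The plan is to obtain the first bound by chaining two changes of measure and one triangle inequality, and then to obtain the $\varepsilon_{PEHE}$ bound by first reducing it to a combination of the factual and counterfactual losses of the \emph{target} task and substituting. Throughout I would write $\epsilon^{T}_{F}(\hat f,g)$ for the expected loss of $\hat f$ measured against a reference outcome function $g$ under $p^{T}_{F}$, so that $\epsilon^{T}_{F}(\hat f)=\epsilon^{T}_{F}(\hat f,f^{T})$, and similarly for the source task and for the counterfactual distributions; since the causal mechanism is shared by the factual and counterfactual worlds of a given task, $p^{T}_{CF}$ and $p^{T}_{F}$ differ only in their marginal over $(x,a)$. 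I will also use (as is standard in this line of work) that the per-sample loss is bounded by one and is Lipschitz in its outcome argument.

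For the first inequality I would proceed in three steps. \emph{Step 1 (counterfactual $\to$ factual on the target):} boundedness of the loss and the definition of $V$ give $\epsilon^{T}_{CF}(\hat f)=\epsilon^{T}_{F}(\hat f)+\int \ell_{\hat f}\,(p^{T}_{CF}-p^{T}_{F}) \le \epsilon^{T}_{F}(\hat f)+V(p^{T}_{F},p^{T}_{CF})$. \emph{Step 2 (target $\to$ source covariate law, keeping $f^{T}$ as the reference):} again by boundedness, $\epsilon^{T}_{F}(\hat f,f^{T})\le \epsilon^{S}_{F}(\hat f,f^{T})+V(p^{T}_{F},p^{S}_{F})$. \emph{Step 3 (swap the reference from $f^{T}$ to $f^{S}$):} Lipschitzness of the loss (equivalently, working directly with the regression functions) makes $\ell_{\hat f}$ evaluated against $f^{T}$ exceed that against $f^{S}$ by at most $|f^{S}(x,a)-f^{T}(x,a)|$ pointwise, so taking expectations under $p^{S}_{F}$ yields $\epsilon^{S}_{F}(\hat f,f^{T})\le \epsilon^{S}_{F}(\hat f)+\mathbb{E}_{(x,a)\sim p^{S}_{F}}[\,|f^{S}(x,a)-f^{T}(x,a)|\,]$. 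Concatenating Steps 1--3 gives the first claim; Steps 2--3 alone also give the auxiliary bound $\epsilon^{T}_{F}(\hat f)\le \epsilon^{S}_{F}(\hat f)+V(p^{T}_{F},p^{S}_{F})+\mathbb{E}_{p^{S}_{F}}[\,|f^{S}-f^{T}|\,]$, which I reuse below.

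For the PEHE bound I would first prove the task-internal inequality $\varepsilon^{T}_{PEHE}(\hat f)\le 2\,\epsilon^{T}_{F}(\hat f)+2\,\epsilon^{T}_{CF}(\hat f)$. Writing $\hat\tau(x)-\tau^{T}(x)=\big(\hat f(x,1)-f^{T}(x,1)\big)-\big(\hat f(x,0)-f^{T}(x,0)\big)$ and using $(a-b)^2\le 2a^2+2b^2$, it suffices to control $\mathbb{E}_{x\sim p^{T}_{F}}[(\hat f(x,j)-f^{T}(x,j))^2]$ for $j\in\{0,1\}$. For each $j$ I split the marginal $p^{T}_{F}(x)$ into its $a=j$ and $a=1-j$ components (both legitimate, after renormalization, by the overlap assumption): the $a=j$ part reconstructs the factual group-loss $\epsilon^{T,a=j}_{F}(\hat f)$ with mixing weight, and the $a=1-j$ part reconstructs the counterfactual group-loss $\epsilon^{T,a=j}_{CF}(\hat f)$ with mixing weight, because the counterfactual covariate law at treatment $j$ coincides with the factual covariate law at treatment $1-j$. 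Summing the four pieces and reassembling the weights $u$ and $1-u$ leaves exactly $2\epsilon^{T}_{F}(\hat f)+2\epsilon^{T}_{CF}(\hat f)$. Finally I bound $\epsilon^{T}_{CF}(\hat f)$ by the first inequality and $\epsilon^{T}_{F}(\hat f)$ by the auxiliary bound of the previous paragraph; adding twice each produces the stated constants $4,4,2,4$.

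The step I expect to demand the most care is the marginal splitting inside the PEHE argument: one must track which weight ($u$ versus $1-u$) attaches to each of the four factual/counterfactual group-losses and verify the identity "counterfactual covariate law at treatment $j$ $=$ factual covariate law at treatment $1-j$", so that the $u$-dependence cancels and precisely $2\epsilon^{T}_{F}+2\epsilon^{T}_{CF}$ remains. The only other point worth stating explicitly is the boundedness and mild regularity of the loss, without which the change-of-measure steps in Steps 1--2 would carry an extra $\lVert \ell_{\hat f}\rVert_\infty$ factor and Step 3 a Lipschitz constant.
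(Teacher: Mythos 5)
Your proposal is correct and follows essentially the same route as the paper: the same chain $\epsilon^{T}_{CF}\le \epsilon^{T}_{F}+V(p^{T}_{F},p^{T}_{CF})$, then $\epsilon^{T}_{F}\le \epsilon^{S}_{F}+V(p^{T}_{F},p^{S}_{F})+\mathbb{E}_{p^{S}_{F}}[|f^{S}-f^{T}|]$, combined with $\varepsilon^{T}_{PEHE}\le 2\epsilon^{T}_{F}+2\epsilon^{T}_{CF}$ to get the constants $4,4,2,4$. The only difference is that you reprove inline (via change of measure, reference swap, and the factual/counterfactual marginal splitting) the two ingredients the paper simply imports from \cite{Ben-David2010} and \cite{shalit}, and in doing so you correctly make explicit the boundedness/Lipschitz assumptions on the loss that those citations implicitly carry.
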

We note that the learning bounds consist of (1) the source factual loss, (2) the difference between the causal functions, and (3) a measure of similarities between probability distributions. However, the $L_1$ distance in Theorem~\ref{thm:smiple_d1} is intractable in practice. A more reasonable candidate distance is IPM distance as defined in Equation \ref{ipm}. The $L_1$ distance can be replaced with the IPM distance as demonstrated by the following Theorem~\ref{thm:simple_ipm}.
\begin{theorem}
\label{thm:simple_ipm}
Suppose that the function class $G$ is stable under addition and multiplication and $\hat f, f^{T} \in G$, then
\begin{align*}
    \begin{aligned}
        \epsilon^{T}_{CF}(\hat{f}) \leq & \epsilon^{S}_{F}(\hat{f}) + 
        \underset{G}{\text{IPM}}(p^{T}_{F},p^{S}_{F}) + \underset{G}{\text{IPM}}(p^{T}_{F},p^{T}_{CF}) \\ & + \underset{{(x,a) \sim p^{S}_{F}}}{\mathbb{E}}[|f^{S}(x,a) - f^{T}(x,a)|]  
    \end{aligned}
\end{align*}
and
\begin{align*}
    \begin{aligned}
        \varepsilon^{T}_{PEHE}(\hat{f}) \leq & 4 \epsilon^{S}_{F}(\hat{f}) + 
        4 \underset{G}{\text{IPM}}( p^{T}_{F},p^{S}_{F}) + 2 \underset{G}{\text{IPM}}(  p^{T}_{F},p^{T}_{CF}) \\ & + 4 \underset{{(x,a) \sim p^{S}_{F}}}{\mathbb{E}}[|f^{S}(x,a) - f^{T}(x,a)|]  
    \end{aligned}
\end{align*}
\end{theorem}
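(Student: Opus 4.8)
The plan is to run the proof of Theorem~\ref{thm:smiple_d1} again, changing only the single step in which a distributional discrepancy is upper bounded. In that proof each term $V(p,q)$ is produced by an inequality of the form $\int_{\mathcal S} g(s)\bigl(p(s)-q(s)\bigr)\,ds \le \|g\|_\infty\, V(p,q)$, where $g$ is the loss function attached to the hypothesis $\hat f$ and the causal function $f^T$ (equivalently, after marginalizing the outcome, the pointwise error $(x,a)\mapsto(f^T(x,a)-\hat f(x,a))^2$, or $|f^T(x,a)-\hat f(x,a)|$ in the $L_1$ case). The first thing I would do is isolate these $g$'s explicitly and record that the remaining ingredients of the proof --- the triangle inequality $|f^T-\hat f|\le|f^S-\hat f|+|f^S-f^T|$ evaluated under $p^S_F$, the split of $\varepsilon_{PEHE}$ into treatment/control and factual/counterfactual parts, and the elementary bound $(\alpha-\beta)^2\le 2\alpha^2+2\beta^2$ applied to $\tau^T-\hat{\tau}=(f^T(\cdot,1)-\hat f(\cdot,1))-(f^T(\cdot,0)-\hat f(\cdot,0))$ --- are untouched, so that the source factual loss, the causal-function discrepancy term, and the constants $4,4,2$ (resp.\ $4$) reappear verbatim.

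The substitution itself is short. Because $\hat f,f^T\in G$ and $G$ is stable under addition and multiplication, $f^T-\hat f\in G$ and hence the relevant $g$ lies in $G$; therefore
\begin{equation*}
\int_{\mathcal S} g(s)\bigl(p(s)-q(s)\bigr)\,ds \;\le\; \sup_{g'\in G}\left|\int_{\mathcal S} g'(s)\bigl(p(s)-q(s)\bigr)\,ds\right| \;=\; \underset{G}{\text{IPM}}(p,q).
\end{equation*}
Applying this with $(p,q)=(p^T_F,p^S_F)$ and then with $(p,q)=(p^T_F,p^T_{CF})$ converts the two $V$'s into the two IPM terms, giving the first inequality; feeding the resulting bound on $\epsilon^T_{CF}(\hat f)$ (and its analogue on $\epsilon^T_F(\hat f)$) into the PEHE decomposition gives the second. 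Note that, unlike in Theorem~\ref{thm:smiple_d1}, no uniform bound $\|g\|_\infty\le 1$ is needed --- the IPM inequality holds for any $g\in G$ --- which is exactly why the structural hypothesis on $G$ replaces the boundedness assumption.

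The main obstacle, and the only place the new hypothesis does real work, is justifying $g\in G$ cleanly. One has to argue that the \emph{loss} function --- not merely $\hat f$ and $f^T$ separately --- is an algebraic combination of elements of $G$: for the squared loss this follows from closure under products (and, if the outcome is kept as an argument, from also assuming the coordinate map $y$ lies in $G$), whereas for the absolute loss one either works throughout with $g^2=(f^T-\hat f)^2\in G$ or assumes the closure needed to form $|\cdot|$. In tandem one must make sure the IPM is taken over precisely the space on which this $g$ is defined and on which $p^T_F,p^S_F,p^T_{CF}$ are compared (covariate space, $(x,a)$-space, or the representation space of $(\Phi,h)$), since a mismatch there would break the inequality $\int g\,d(p-q)\le\underset{G}{\text{IPM}}(p,q)$.
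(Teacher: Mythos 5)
Your proposal matches the paper's own argument: the paper likewise re-runs the Theorem~\ref{thm:smiple_d1} derivation, bounding the terms $\int (f^{T}(x,a)-\hat f(x,a))^{2}\,(p-q)\,dx\,da$ (and the analogous source/target comparison) by $\sup_{g\in G}\bigl|\int g\,(p-q)\bigr|=\underset{G}{\text{IPM}}(p,q)$, with the closure of $G$ under addition and multiplication guaranteeing the squared error lies in $G$, and then feeds the two resulting inequalities into the Shalit-type bound $\varepsilon^{T}_{PEHE}\le 2\epsilon^{T}_{F}+2\epsilon^{T}_{CF}$ to get the constants $4,4,2,4$. Your remarks about working with the squared loss (rather than an absolute loss) and matching the space on which the IPM is taken are consistent with what the paper implicitly does, so the proof is correct and essentially identical in approach.
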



\subsection{Bounds for Counterfactual Balancing Frameworks}
Suppose that we have a representation learning model (e.g., TARNet) $\hat{f}^{S} = (\Phi, h)$ trained on a source causal inference task. We apply the source model to a different target task. For notational simplicity, we denote $P(\Phi(X)|A=a)$ by $P(\Phi(X_{a}))$ for $a\in\{0,1\}$.
We make the following assumptions \textbf{A1, A2, A3}: 
\begin{itemize}
    \item \textbf{A1}: $\Phi$ is injective (thus $\Psi = \Phi^{-1}$ exists on $\text{Im}(\Phi)$).
    \item \textbf{A2}: There exists a real function space $G$ on $\text{Im}(\Phi)$ such that the function $r \mapsto \ell^{T}_{\Phi,h}(\Psi(r), a,y) \in G$.
    \item \textbf{A3}: There exists a function class $G'$ on $\mathcal{Y}$ such that $y\mapsto \ell_{\Phi,h}(x,a,y) \in G'$. 
\end{itemize}


The above theorem guarantees that causal knowledge can be transferred under reasonable assumptions. The following Lemma provides an upper bound on the counterfactual loss for transferring causal knowledge.


\begin{lemma}
Suppose that Assumptions A1, A2, A3 hold. Then the counterfactual loss of any model $(\Phi,h)$ on the target task satisfies:
\label{lemma2}
\begin{equation*}
    \begin{aligned}
        \epsilon_{CF}^{T}(\Phi,h) \le &\epsilon_F^{S,a=1}(\Phi,h) + \epsilon_F^{S,a=0}(\Phi,h)\\ 
                               & + \underset{G}{\text{IPM}}(P(\Phi(X_1^{T})),P(\Phi(X_1^{S}))) \\
                               & + \underset{G}{\text{IPM}}(P(\Phi(X_0^{T})),P(\Phi(X_0^{S}))) \\
                               & + \underset{G}{\text{IPM}}(P(\Phi(X_0^{T})),P(\Phi(X_1^{T})))+2\gamma^*\\
    \end{aligned}
\end{equation*}
where 
\begin{equation}
    \gamma^* = \underset{{x \sim p^S_F}}{\mathbb{E}}\left[\underset{G'}{\text{IPM}}(P(Y_a^{S}|x), P(Y_a^{T}|x))\right]
\end{equation}
measures the fundamental difference between two causal inference tasks.
\end{lemma}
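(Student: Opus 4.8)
The plan is to decompose the target counterfactual loss $\epsilon_{CF}^{T}(\Phi,h)$ group by group into treatment and control pieces, and then transport each piece first across tasks (from target to source) and then across treatment arms, paying an IPM penalty at each step, all carried out in the representation space $\mathrm{Im}(\Phi)$ rather than on $\mathcal{X}$. Concretely, writing $\epsilon_{CF}^{T}(\Phi,h) = \tfrac12\epsilon_{CF}^{T,a=1}(\Phi,h) + \tfrac12\epsilon_{CF}^{T,a=0}(\Phi,h)$ (or without the $1/2$ weights, depending on the convention in the definitions — I will track this to match the constants), I would handle the $a=1$ counterfactual term as follows. By Assumption A1 the map $\Phi$ is injective with inverse $\Psi$ on its image, so I can push the integral defining $\epsilon_{CF}^{T,a=1}$ forward through $\Phi$ and rewrite it as an integral over $r \in \mathrm{Im}(\Phi)$ of the function $g(r) := \ell^{T}_{\Phi,h}(\Psi(r),1,y)$ against the pushforward density $P(\Phi(X_1^{T}))$ — here I would also need to integrate out $y$ or treat the loss as already marginalized, matching Assumption A3's role; A2 guarantees $g \in G$.

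Next I would insert and subtract the same integrand evaluated against other distributions and invoke the definition of IPM (Equation~\ref{ipm}) as an upper bound on $\left|\int g\,(p-q)\right|$ whenever $g\in G$. The chain I would use for the $a=1$ counterfactual piece is: integrate $g$ against $P(\Phi(X_1^{T}))$ (the target-treatment covariates feeding the counterfactual $a=1$ evaluation — note that in the counterfactual world the $a=1$ outcomes are evaluated on the control population, so the relevant covariate law is $P(\Phi(X_0^{T}))$); move from $P(\Phi(X_0^{T}))$ to $P(\Phi(X_1^{T}))$ at cost $\mathrm{IPM}_G(P(\Phi(X_0^{T})),P(\Phi(X_1^{T})))$; then move from $P(\Phi(X_1^{T}))$ to $P(\Phi(X_1^{S}))$ at cost $\mathrm{IPM}_G(P(\Phi(X_1^{T})),P(\Phi(X_1^{S})))$; the resulting integral of $g$ against the source-treatment law is close to $\epsilon_F^{S,a=1}$ except that $g$ still uses the \emph{target} loss $\ell^{T}$ and the target outcome law $P(Y_1^{T}|x)$, whereas $\epsilon_F^{S,a=1}$ uses $\ell^{S}$ and $P(Y_1^{S}|x)$ — bridging that gap is exactly what $\gamma^{*}$ is for, using Assumption A3 so that $y\mapsto\ell(x,a,y)\in G'$ and the definition of $\gamma^{*}$ as the source-expected $G'$-IPM between $P(Y_a^{S}|x)$ and $P(Y_a^{T}|x)$. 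I would run the symmetric argument for the $a=0$ counterfactual piece (covariate law $P(\Phi(X_1^{T}))$ in that world), producing the $\epsilon_F^{S,a=0}$ term, the $\mathrm{IPM}_G(P(\Phi(X_0^{T})),P(\Phi(X_1^{T})))$ term again (or once, if the two halves share it), and the $\mathrm{IPM}_G(P(\Phi(X_0^{T})),P(\Phi(X_0^{S})))$ term. Summing and collecting constants gives the stated bound with the two $\gamma^{*}$ contributions combining into $2\gamma^{*}$.

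The main obstacle, and the step I would be most careful about, is the simultaneous change of \emph{loss function} ($\ell^{T}\to\ell^{S}$) and \emph{outcome conditional} ($P(Y_a^{T}|x)\to P(Y_a^{S}|x)$) when converting the transported term into the genuine source factual loss $\epsilon_F^{S,a}$: I need to argue that the difference is controlled by $\gamma^{*}$, which requires (i) that the loss only depends on the task through $y$'s distribution given $x$ — so that after fixing $\Phi,h$ the integrand $y\mapsto \ell_{\Phi,h}(x,a,y)$ is a fixed element of $G'$ independent of task — and (ii) that I am allowed to bound $\big|\int \ell_{\Phi,h}(x,a,y)(P(Y_a^{S}|x)-P(Y_a^{T}|x))\,dy\big| \le \mathrm{IPM}_{G'}(P(Y_a^{S}|x),P(Y_a^{T}|x))$ pointwise in $x$ and then take expectation over $p_F^{S}$. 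A secondary subtlety is bookkeeping the $1/2$ weights and the treatment/control labels correctly in the counterfactual world (the $a=1$ counterfactual loss integrates over the \emph{control} population and vice versa), since getting these backwards would mismatch which IPM terms appear; I would fix the convention from the Definitions block and verify the constants reduce exactly to those displayed. Everything else — injectivity pushforwards (A1), membership $g\in G$ (A2, plus closure of $G$ so that triangle-inequality insertions stay in $G$), and the telescoping IPM inequalities — is routine once the above is pinned down.
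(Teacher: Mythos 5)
Your chain is essentially the paper's proof, just inlined: the paper first applies the Shalit-style within-target bound $\epsilon^{T}_{CF}(\Phi,h)\le \epsilon_F^{T,a=1}(\Phi,h)+\epsilon_F^{T,a=0}(\Phi,h)+\mathrm{IPM}_{G}\big(P(\Phi(X_0^{T})),P(\Phi(X_1^{T}))\big)$ and then its cross-task per-arm lemma $\epsilon_F^{T,a}\le \epsilon_F^{S,a}+\mathrm{IPM}_{G}\big(P(\Phi(X_a^{T})),P(\Phi(X_a^{S}))\big)+\gamma^{*}$ (proved by exactly your pushforward-through-$\Phi$ covariate-shift step plus the pointwise $G'$-IPM bound on the outcome conditionals), applied once per arm to produce the $2\gamma^{*}$. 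The only bookkeeping to settle is that the arm weights are $1-u$ and $u$ with $u=p_F^{T}(A=1)$ (not $1/2$), and keeping them through the within-target arm swap is what makes the treated/control IPM appear once rather than twice, after which bounding the weights by one yields the stated constants.
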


\begin{theorem}{(Transferability of Causal Knowledge)} \label{main_theorem_1}Suppose that Assumptions A1, A2, A3 hold. The performance of source model on target task, i.e. $\varepsilon^{T}_{PEHE}(\Phi,h)$, is upper bounded by:
\begin{equation*}
\begin{aligned}
    \varepsilon^{T}_{PEHE}(\Phi, h) \le &2(\epsilon_F^{S,a=1}(\Phi,h) + \epsilon_F^{S,a=0}(\Phi,h)\\ 
    &+\underset{G}{\text{IPM}}(P(\Phi(X_1^{T})),P(\Phi(X_1^{S})))\\
    & + \underset{G}{\text{IPM}}(P(\Phi(X_0^{T})),P(\Phi(X_0^{S}))) \\
    & + \underset{G}{\text{IPM}}(P(\Phi(X_0^{T})),P(\Phi(X_1^{T}))+2\gamma^*) 
\end{aligned}
\end{equation*}
\end{theorem}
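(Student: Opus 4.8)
The plan is to derive the $\varepsilon_{PEHE}^T$ bound from Lemma~\ref{lemma2} by relating the target PEHE to the target factual and counterfactual losses, and then invoking the Lemma to control the counterfactual piece. First I would recall the elementary inequality $(a-b)^2 \le 2a^2 + 2b^2$ applied to $\hat\tau(x) - \tau^T(x) = \bigl(\hat f(x,1) - f^T(x,1)\bigr) - \bigl(\hat f(x,0) - f^T(x,0)\bigr)$, which yields
\begin{equation*}
\varepsilon_{PEHE}^{T}(\Phi,h) \le 2\,\underset{x\sim p_F^T}{\mathbb{E}}\bigl[(\hat f(x,1)-f^T(x,1))^2\bigr] + 2\,\underset{x\sim p_F^T}{\mathbb{E}}\bigl[(\hat f(x,0)-f^T(x,0))^2\bigr].
\end{equation*}
The key observation is that each of these two terms is a loss evaluated against the \emph{true} target function under the \emph{marginal} covariate law $p_F^T(x)$, with a \emph{fixed} treatment arm. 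Splitting $p_F^T(x) = p_F^T(x\mid A{=}1)p_F^T(A{=}1) + p_F^T(x\mid A{=}0)p_F^T(A{=}0)$, the arm-$1$ term decomposes into a contribution from the $A{=}1$ conditional (which is exactly $\epsilon_F^{T,a=1}$ up to the noise/consistency step, since on the factual $A{=}1$ population $Y = Y_1$) and a contribution from the $A{=}0$ conditional, which is precisely the \emph{counterfactual} loss $\epsilon_{CF}^{T,a=1}$; symmetrically for arm $0$. Summing, one gets $\varepsilon_{PEHE}^T(\Phi,h) \le 2\bigl(\epsilon_F^{T}(\Phi,h) + \epsilon_{CF}^{T}(\Phi,h)\bigr)$, modulo absorbing the realizability assumption that allows replacing $(\hat f - f^T)^2$ by the loss $\ell_{\Phi,h}$ evaluated at the observed $y$ (this is the place where one needs the $L^2$ loss form and $f^T$ being the conditional mean; the cross term with the noise vanishes in expectation).

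Next I would bound $\epsilon_F^T(\Phi,h)$ itself in the same style as Lemma~\ref{lemma2}: write $\epsilon_F^T = \epsilon_F^{T,a=1} + \epsilon_F^{T,a=0}$ (after splitting by arm and absorbing the treatment-probability weights into the nonnegative losses, using that they are bounded by $1$), then transport each target factual arm-loss to the corresponding source factual arm-loss via an IPM bound in the representation space under Assumption~A2: $\epsilon_F^{T,a=1}(\Phi,h) \le \epsilon_F^{S,a=1}(\Phi,h) + \mathrm{IPM}_G(P(\Phi(X_1^T)),P(\Phi(X_1^S))) + \gamma^*$-type term coming from Assumption~A3, and likewise for $a=0$. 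Combined with Lemma~\ref{lemma2} for the $\epsilon_{CF}^T$ term, every summand on the right already appears in the claimed bound, so after collecting like terms and using that the various IPM terms and $\gamma^*$ are nonnegative (so over-counting only weakens the inequality), the factor of $2$ out front absorbs everything into the stated expression.

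The main obstacle I anticipate is bookkeeping the constants and the arm-splitting carefully: the $2(a-b)^2$ step, the decomposition of the marginal-$x$ expectation into factual-plus-counterfactual per arm, and the reuse of the Lemma must be arranged so that no term is counted with too small a coefficient — in particular one must check that the $\mathrm{IPM}_G(P(\Phi(X_0^T)),P(\Phi(X_1^T)))$ "within-target" balancing term, which appears once in Lemma~\ref{lemma2}, still suffices after the extra factual-transfer step, rather than needing a second copy. I expect this works because that cross-arm IPM is only needed to move between the two target conditionals once the losses are already in representation space, and the factual-transfer step does not reintroduce a cross-arm gap. A secondary subtlety is that Assumptions A1–A3 are stated for a generic loss and must be applied consistently to both the factual and counterfactual pieces; injectivity of $\Phi$ (A1) is what licenses pushing the loss function through $\Psi = \Phi^{-1}$ so that the IPM in the representation space is well-defined. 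The rest is routine triangle-inequality chaining.
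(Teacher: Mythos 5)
There is a genuine gap in the final accounting step. Your route --- bound $\varepsilon_{PEHE}^{T} \le 2(\epsilon_F^{T} + \epsilon_{CF}^{T})$, then bound $\epsilon_F^{T}$ by a per-arm transport to the source (adding $\epsilon_F^{S,a=1}+\epsilon_F^{S,a=0}$, two source--target IPM terms, and $2\gamma^*$) and separately invoke Lemma~\ref{lemma2} for $\epsilon_{CF}^{T}$ --- produces a \emph{second} copy of $\epsilon_F^{S,a=1}+\epsilon_F^{S,a=0}$, of $\mathrm{IPM}_G(P(\Phi(X_1^T)),P(\Phi(X_1^S)))$ and $\mathrm{IPM}_G(P(\Phi(X_0^T)),P(\Phi(X_0^S)))$, and of $2\gamma^*$, because Lemma~\ref{lemma2} already contains all of them. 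Collecting terms, your chain establishes only $\varepsilon_{PEHE}^{T}(\Phi,h)\le 4\epsilon_F^{S,a=1}+4\epsilon_F^{S,a=0}+4\,\mathrm{IPM}_G(P(\Phi(X_1^T)),P(\Phi(X_1^S)))+4\,\mathrm{IPM}_G(P(\Phi(X_0^T)),P(\Phi(X_0^S)))+2\,\mathrm{IPM}_G(P(\Phi(X_0^T)),P(\Phi(X_1^T)))+8\gamma^*$, which is strictly weaker than the stated bound with overall factor $2$. Your remark that ``over-counting only weakens the inequality, so the factor of $2$ absorbs everything'' is backwards as a proof step: a looser upper bound is implied by the theorem, it does not imply it. Note also that the subtlety you flagged (whether a second within-target balancing IPM is needed) is not where the trouble lies; the doubling occurs on the source factual losses, the cross source/target IPMs, and $\gamma^*$.

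The fix --- and the paper's actual route --- is to avoid treating $\epsilon_F^{T}$ and $\epsilon_{CF}^{T}$ independently. Apply the combined bound of Theorem~\ref{bounding_epehe} on the target task, $\varepsilon_{PEHE}^{T}(\Phi,h)\le 2\bigl(\epsilon_F^{T,a=0}(\Phi,h)+\epsilon_F^{T,a=1}(\Phi,h)+\mathrm{IPM}_G(P(\Phi(X_0^T)),P(\Phi(X_1^T)))\bigr)$; this inequality exploits the complementary weights $u$ and $1-u$ appearing in $\epsilon_F^{T}$ and $\epsilon_{CF}^{T}$ (via Theorem~\ref{bounding_cfl}), so each arm's target factual loss enters exactly once rather than twice. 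Then transport each arm once with Lemma~\ref{lemma1}, $\epsilon_F^{T,a}(\Phi,h)\le\epsilon_F^{S,a}(\Phi,h)+\mathrm{IPM}_G(P(\Phi(X_a^T)),P(\Phi(X_a^S)))+\gamma^*$, which gives precisely the stated bound with factor $2$ and $2\gamma^*$. Your ingredients --- the $(a-b)^2\le 2a^2+2b^2$ step, the arm-splitting of the marginal expectation, and the representation-space transport licensed by A1--A3 --- are all sound; only the order of chaining must change so that the target factual arm losses are transported once, not twice.
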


Theorem~\ref{main_theorem_1} implies that good performance on the target task is guaranteed if (1) the source model has a slight factual loss (e.g., the first and second term in the upper bound) and (2) the distributions of the control and the treatment group features are similar in the latent domain (e.g., the last three terms in the upper bound). This upper bound provides a sufficient condition for transfer learning in causal inference scenarios, indicating the transferability of causal knowledge.

\section{Task-Aware ITE Knowledge Transfer }\label{sec:task_distance}
In Section~\ref{sec:theory}, the regret bounds indicate the transferability of causal knowledge between pair of causal inference tasks. In this section, we propose a causal inference learning framework (illustrated in Figure~\ref{fig:intro_plot_1}) capable of identifying the most relevant causal knowledge, when multiple sources exist, to train the target task. 
Note that although the generalization bounds are informative for understanding viability of transferring causal knowledge, they may not be the most constructive approach to select the best source task because the order of the upper bounds of errors is not necessarily the same as the order of the errors.
To this end, we first propose a task affinity (CITA) that satisfies the symmetry property of causal inference tasks (see Sec~\ref{symmetry_of_causal_inference_task}) to find the closest source task to the target task. We observe that CITA strongly correlates with counterfactual loss. After obtaining the closest task using the computed task distances, its knowledge (e.g., trained model, bundled data) is utilized for training the target task.

\subsection{Task Affinity Score}
\label{sec:fisher}
Let $(T, D)$ denote the pair of a causal inference task $T$ and its dataset $D = (X, A, Y)$, where $D$ consists of the covariates $X$, the corresponding treatment assignments $A$, and the factual outcomes $Y$. We formalize a \textit{sufficiently well-trained} deep network representing a causal task-dataset pair $(T, D)$ in Appendix (see Sec F). Here, all the previous tasks' models are assumed to be sufficiently well-trained to represent the corresponding tasks. 
Next, we recall the definitions of the Fisher Information matrix and the Task Affinity Score~\citep{le2022task, 9766163}.

\begin{definition}[Fisher Information Matrix]
For a neural network $N_{\theta_{s}}$ with weights $\theta_{s}$ trained on data $D_{s}$, a given test dataset $D_t$ and the negative log-likelihood loss function $L(\theta,D)$, the Fisher Information matrix is defined as:
\begin{align}\label{Fishermatrix}
    F_{s,t} &=\mathbb{E}_{D\sim D_t}\Big[\nabla_{\theta} L(\theta_{s},D)\nabla_{\theta} L(\theta_{s},D)^T\Big] 
\end{align}
\end{definition}

\begin{definition}[Task Affinity Score]
\label{frechdist}
Let $(T_s, D_s)$ and $(T_t, D_t)$ denote the source and target task-dataset pairs, respectively. Let the source task be represented by the $\varepsilon$-approximation network $N_{\theta_s}$. Let $F_{s,s}$ be the Fisher Information matrix of $N_{\theta_s}$ using the source data $D_s$. Let $F_{s,t}$ be constructed analogously using the target data $D_t$ on $N_{\theta_s}$. The distance from the source task $T_s$ to the target task $T_t$ is defined as:
\begin{align}\label{frechet_org}
    d[s,t] &= \frac{1}{\sqrt{2}} \|F_{s,s}^{1/2} - F_{s,t}^{1/2}\|_F,
\end{align}
\end{definition}
where the norm is the Frobenius norm. It has been shown that $0 \le d[s,t] \le 1$, where $d[s,t]=0$ denotes perfect similarity and $d[s,t]=1$ indicates perfect dissimilarity. In Appendix (see Theorem F.3), we prove that under stringent assumptions, the order of task distances between candidate source tasks and the target task is preserved in a parallel universe where the roles of the control and treatment groups are exchanged. 

\label{method}
\subsection{Causal Inference Task Affinity (CITA)}
\label{symmetry_of_causal_inference_task}
\textbf{Symmetry of Causal Inference Tasks.} We first observe that causal inference tasks present a unique symmetry. Specifically, causal inference tasks have multiple regression problems, one for each treatment group. Given a source task, if we alternate the treatment labels (i.e., $0$ to $1$ and $1$ to $0$), the treatment effect (i.e., $\mathbb{E}[Y_{1}-Y_{0}|X]$) will be negated. Consequently, the non-symmetric task affinity measure ~\citep{le2022task} between the original task and the permuted task can still be considered. Moreover, the original model does not need to be retrained for transfer learning as we only need to permute the roles of output layers of the model to predict the individual treatment effects correctly for each group. In other words, the causal task affinity between these two permuted tasks must intuitively equal to zero. CITA lends itself to this property of causal inference tasks.

\begin{figure*}[t]
\centering
\centerline{\includegraphics[width=0.78\textwidth]{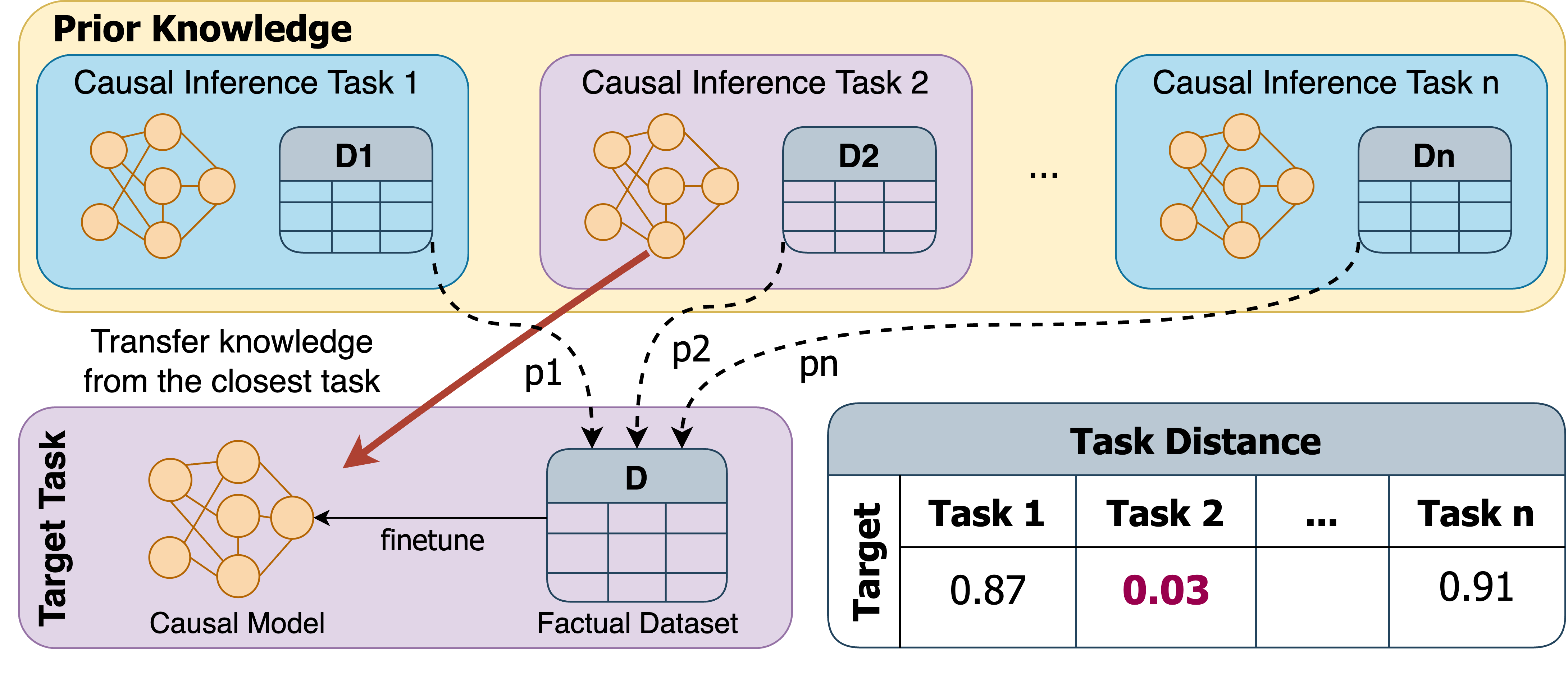}}
\caption{Overview of transfer learning in causal inference. Task affinity (CITA) is used to identify the closest task(s) from prior tasks. The models and datasets from the relevant prior tasks are transferred to the target task.}
\label{fig:intro_plot_1}
\end{figure*}

\begin{algorithm}[t]
\SetKwInput{KwInput}{Input}         
\SetKwInput{KwOutput}{Output}       
\SetKwInput{KwData}{Data}
\SetKwFunction{main}{Main}
\SetKwFunction{dis}{TAS}
\SetKwComment{Comment}{$\triangleright$ }{}
\SetCommentSty{scriptsize}

\DontPrintSemicolon

\KwData{Source tasks: $\mathcal{S} = \{\{(X_i, A_i, Y_i)\}, 1\leq i \leq m \}$, Target task: $T = (X_t, A_t, Y_t)$}
\KwInput{Causal Inference Models $N_{\theta_1}, N_{\theta_2},\ldots, N_{\theta_m}$}
\KwOutput{Causal Inference model for the target task $T$}

    \SetKwProg{Fn}{Function}{:}{}
    \Fn{\dis{$X_s, A_s, X_s, A_s, N_{\theta_s}$}}{
        Compute $F_{s,s}$ using $N_{\theta_s}$ with $X_s, A_s$\;
        
        Compute $F_{s,t}$ using $N_{\theta_t}$ with $X_t, A_t$\;
        
        \KwRet $\displaystyle d[s, t] = \frac{1}{\sqrt{2}} \|F_{s,s}^{1/2} - F_{s,t}^{1/2}\|_F$
    }
    \SetKwProg{Fn}{Function}{:}{} \Fn{\main}{
        \Comment*[r]{Find the closest tasks in S}
        \For{$i=1,2,\ldots,m$}{ 
            Train $N_{\theta_i}$ for source task $i$ using $(X_i, A_i,Y_{i})$\;
            
            Compute the distance from source task $i$ to target task $T$: \;\quad$d_i^+ = \dis(X_i,A_i, X_t,A_t, N_{\theta_i})$\;
            
            Compute the distance from source task $i$ to target task $T'$, where $a'$'s treatments are inverted treatments of $a$: \;\quad$d_i^- = \dis(X_i,A_i, X_t,1-A_t, N_{\theta_t})$\;
            
            CITA: $s_{sym_i} = \mathrm{min} (d_i^+, d_i^-)$
       }
       \KwRet closest tasks: $i^* = \underset{i}{\mathrm{argmin}}\ d_{sym_i}$\;
       
       \Comment*[r]{ITE Knowledge Transfer}
       Fine-tune $N_{\theta_i^*}$ with the target task's data $(X_t, A_t,Y_t)$\;
    }
    \KwRet $N_{\theta_i^*}$
\caption{Task-Aware ITE Knowledge Transfer}
\label{alg}
\end{algorithm}

\label{affinity}
\textbf{Properties of CITA.} In this work, assume that all causal inference tasks under consideration have the same number of treatment labels, and each task is represented by a TARNet network. Let $(T_{s}, D_{s})$ and $(T_{t}, D_{t})$ be the source and target causal inference tasks, respectively. Here, $D_{s} = (X_{s},A_{s},Y_{s})$, $D_{t} = (X_{t},A_{t},Y_{t})$, and $A_{s}, A_{t} \in \{0,1, \ldots,M\}$. 

Consider the symmetric group $\mathbb{S}_{M+1}$ consisting of all permutations of labels $\{0,1, \ldots,M\}$. For $\sigma \in \mathbb{S}_{M+1}$, let $A_{\sigma(t)}$ denote the permutation of the target treatment labels under the action of $\sigma$. Let $\displaystyle d_{\sigma} = \frac{1}{\sqrt{2}} \|F_{a,a}^{1/2} - F_{a,\sigma(t)}^{1/2}\|_F$
then $$d_{sym}[s, t] = \min_{\sigma \in \mathbb{S}_{M+1}} (d_{\sigma})$$ is the \textit{label-invariant task affinity} between causal tasks $T_s$ and $T_t$. Similar to the Task Affinity Score~\citep{le2022task}, the proposed distance is robust to the architectural choice of the representation networks (e.g., TARNet). In other words, the order of closeness of tasks found using this distance remains the same across different choices of network architecture or hyper-parameters. Notably, our approach involves a minimization problem with $(M+1)!$ candidates.  However, in real-life applications, the number of treatments $M+1$ is often a a small number. It is nevertheless still important to note that it may be challenging to apply our method in the scenarios with large $M$ or for a continuum of treatments.

\subsection{Causal Knowledge Transfer from CITA}
Based on CITA, we propose a task-aware causal inference learning framework, whose procedure is illustrated in Figure~\ref{fig:intro_plot_1}, that is capable of utilizing past experiences to learn the target task's ITE quickly. In particular, given multiple trained source tasks, the closest task is identified via causal inference task affinity (CITA). Subsequently, its knowledge (e.g., trained causal model, weights, parameters, initialization settings) is applied to learn the causal effect of the target task. Here, the source task's model is fine-tuned with the target task's data for estimating ITEs. In our experiments, we compare the performance of our method to training from scratch. We also compare our method to data-bundling, as illustrated in Figure 1 in the Appendix. The pseudo-code of the proposed framework is provided in Algorithm~\ref{alg}.

\begin{table}[t]
\caption{Overview of the causal inference datasets constructed for Transfer Learning.}
\label{dataset_table}
\begin{center}
\begin{small}
\begin{sc}
\begin{tabular}{l|ccc}
\hline
\multicolumn{1}{l}{\bf Name}  &\multicolumn{1}{c}{\bf Type} &\multicolumn{1}{c}{\bf Task}&\multicolumn{1}{c}{\bf CF Avail}\\ 
\hline
IHDP&Semi-Synthetic &REG&YES \\
Twins&Real-world &CLS &NO \\
Jobs &Real-world &CLS &NO \\
RKHS& Synthetic &REG &YES \\
Movement &Synthetic &REG &YES\\
Heat &Synthetic &CLS &YES \\
\hline
\hline
\multicolumn{1}{l}{\bf Name} &\multicolumn{1}{c}{\bf Subject} &\multicolumn{1}{c}{\bf \#Task} &\multicolumn{1}{c}{\bf \#Sample}\\
\hline
IHDP&Health &100 &747\\
Twins&Health&11&2000\\
Jobs&Social Sciences &10&619  \\
RKHS& Mathematics &100&2000\\
Movement &Physics&12&4000\\
Heat&Physics&20&4000\\
\hline
\end{tabular}
\end{sc}
\end{small}
\end{center}
\footnotesize{\textbf{REG/CLS}: Regression/Classification, \textbf{CF Avail}: Counterfactual Data Availability}
\end{table}

\begin{table}
\caption{The impact of causal knowledge transfer on the performance and the required size of the training dataset.}
\label{performance_table}
\begin{center}
\begin{small}
\begin{sc}
\begin{tabular}{lcccc}
\toprule
\multicolumn{1}{l}{\bf Dataset} & IHDP & RKHS & Movement & Heat \\ 
\midrule
\multicolumn{1}{l}{\bf ORI Size} & 747 &2000 &4000 &4000 \\
\multicolumn{1}{l}{\bf TL Size} & 150 & 50 & 750 & 500 \\
\multicolumn{1}{l}{\bf W/O TL(I)} &0.61 &0.68 &0.021&6.7e-6 \\
\multicolumn{1}{l}{\bf W/O TL (P)} &0.97 &0.96 &0.025 &1.4e-5 \\
\multicolumn{1}{l}{\bf W TL (P)} &0.65 &0.46 & 0.011 &4.2e-6 \\
\multicolumn{1}{l}{\bf Data Gain} & $>80\%$ &  $>95\%$ & $>80\%$ & $>85\%$\\ 
\multicolumn{1}{l}{\bf Perf Gain} & $>30\%$ &  $>50\%$ & $>55\%$ & $>70\%$ \\
\bottomrule
\end{tabular}
\end{sc}
\end{small}
\end{center}
\footnotesize{\textbf{ORI/TL Size}: Number of data required without and with TL, \textbf{W/O TL (I \& P)}: Performance achieved without TL (\emph{ideal} \& \emph{practice}); (ideal) is the model with the lowest $\varepsilon_{PEHE}$ (not attainable); (practice) is the model with the lowest training loss,  \textbf{W TL (P)}:  Performance with TL, \textbf{Data Gain}: Data Reduction with TL, \textbf{Perf Gain}: Error Reduction with TL.}
\end{table}

\section{Experiments}
\label{result}
We first describe the datasets we have used for our empirical studies. Subsequently, we present empirical results that (1) show that CITA identifies the symmetries within causal inference tasks, (2) demonstrate the strong correlation between CITA and the counterfactual loss, and (3) quantify the gains of transfer learning for causal inference.

\subsection{Causal Inference Datasets}
We present a representative family of causal inference datasets suitable for studying ITE knowledge transfer. Some of these are well-established datasets in the literature, while others are motivated by known causal structures in diverse areas such as social sciences, physics, health, and mathematics.
Table~\ref{dataset_table} provides a brief description of the datasets used in our studies. A more detailed description is provided in Appendix (see Sec B). For each dataset, a number of corresponding causal inference tasks exist, which can be used to study transfer learning scenarios. Please note that we can only access the counterfactual data of the synthetic/semi-synthetic datasets (i.e., IHDP, RKHS, Movement, Heat). We do not possess the counterfactual data of real-world datasets (i.e., Twins, Jobs).

\subsection{The symmetry of CITA}
Our numerical results for the Jobs and Twins datasets verify that the CITA can capture the symmetries within causal inference problems. We flip treatment labels  ($0$ and $1$) with probability $p$ (without any changes to the features and the outcomes) independently for each control and treatment data point. In Figure~\ref{fig:td_sym_plot}, we depict the trend of CITA between the original and the altered dataset by varying $p$, $p \in [0,1]$. The symmetry of CITA is evident (with some deviation due to limited training data for calculating CITA). The altered dataset with $p=1$ is the closest to the original dataset (as it should be) since we have completely flipped the treatment assignments. The altered dataset with $p=0.5$ is the furthest (as it should be) since we have randomly shuffled the control and the treatment groups. We also compare CITA with the nonsymmetrized task affinity~\citep{le2022task} on the Jobs and the Twins datasets. Figure~\ref{fig:td_sym_plot} shows that CITA has successfully captured the symmetries within causal inference tasks. We observe that CITA demonstrates symmetry at $p=0.5$, indicating the symmetry of the causal inference tasks. In contrast, the original nonsymmetrized task affinity fails to capture this symmetry property.

\begin{figure}[t]
\begin{center}
    \centering
    \includegraphics[width=0.23\textwidth]{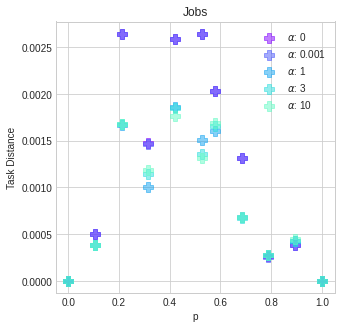}
    \centering
    \includegraphics[width=0.23\textwidth]{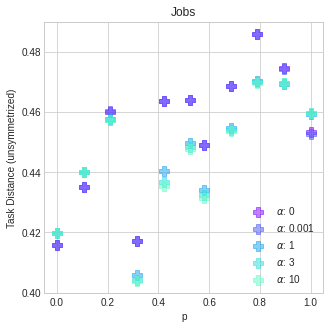}
    \centering
    \includegraphics[width=0.23\textwidth]{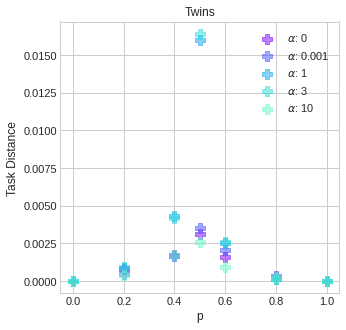}
    \centering
    \includegraphics[width=0.23\textwidth]{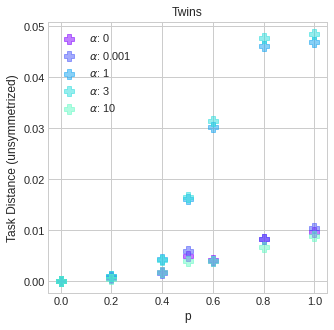}
    \caption{The symmetry of CITA. $p$ (on the x-axis) denotes the probability of flipping treatment assignments of the original dataset. Left column: CITA; right column: nonsymmetrized task affinity.}
    \label{fig:td_sym_plot}
\end{center}
\end{figure}


\begin{figure}[t]
\begin{center}
         \hfill
         \centering
         \includegraphics[width=0.23\textwidth]{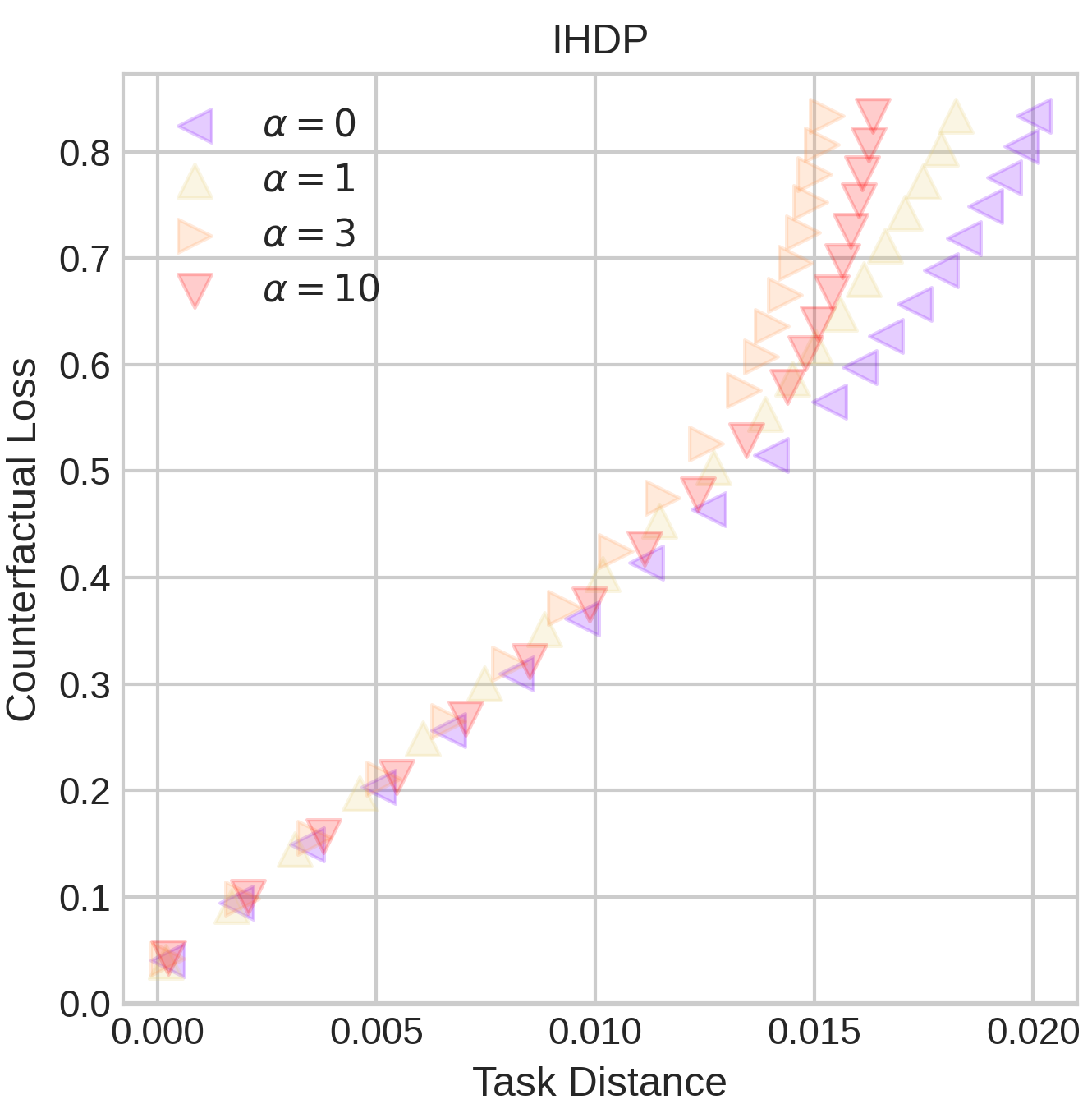}
         \label{fig:three sin x}
     \hfill
         \centering
         \includegraphics[width=0.23\textwidth]{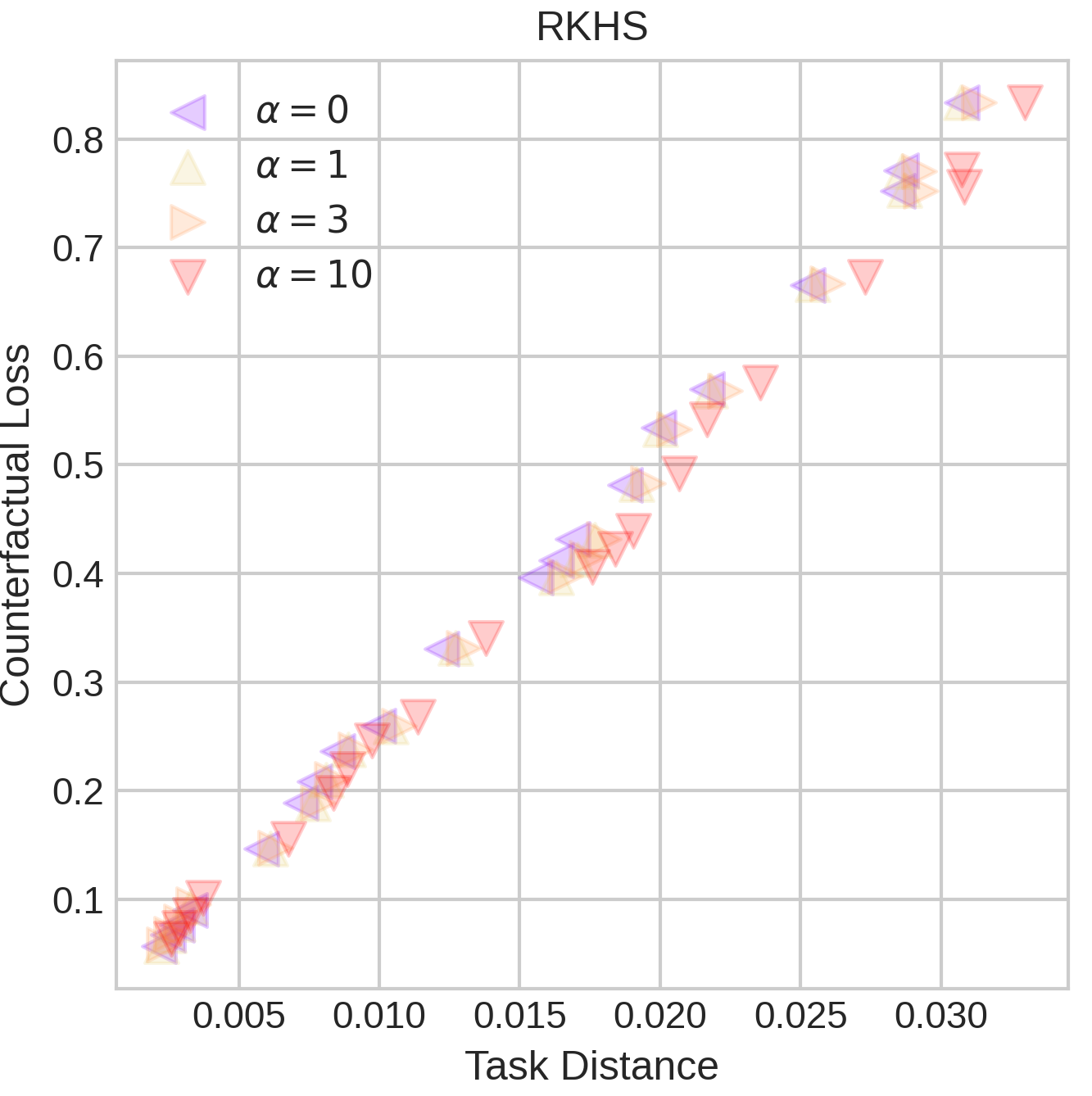}
         \label{fig:five over x}
     \hfill
         \centering
         \includegraphics[width=0.23\textwidth]{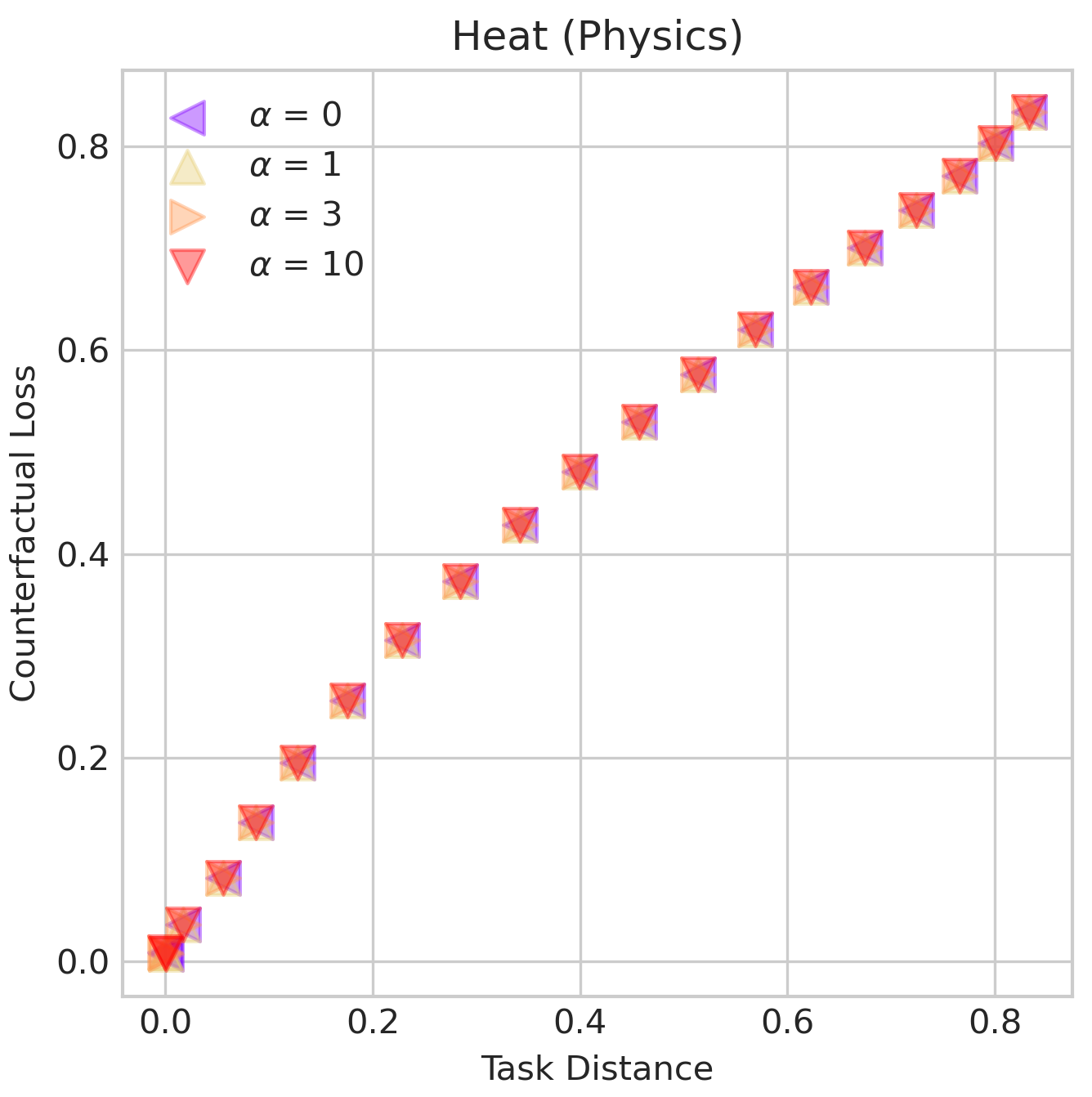}
         \label{fig:y equals x}
     \hfill
         \centering
         \includegraphics[width=0.23\textwidth]{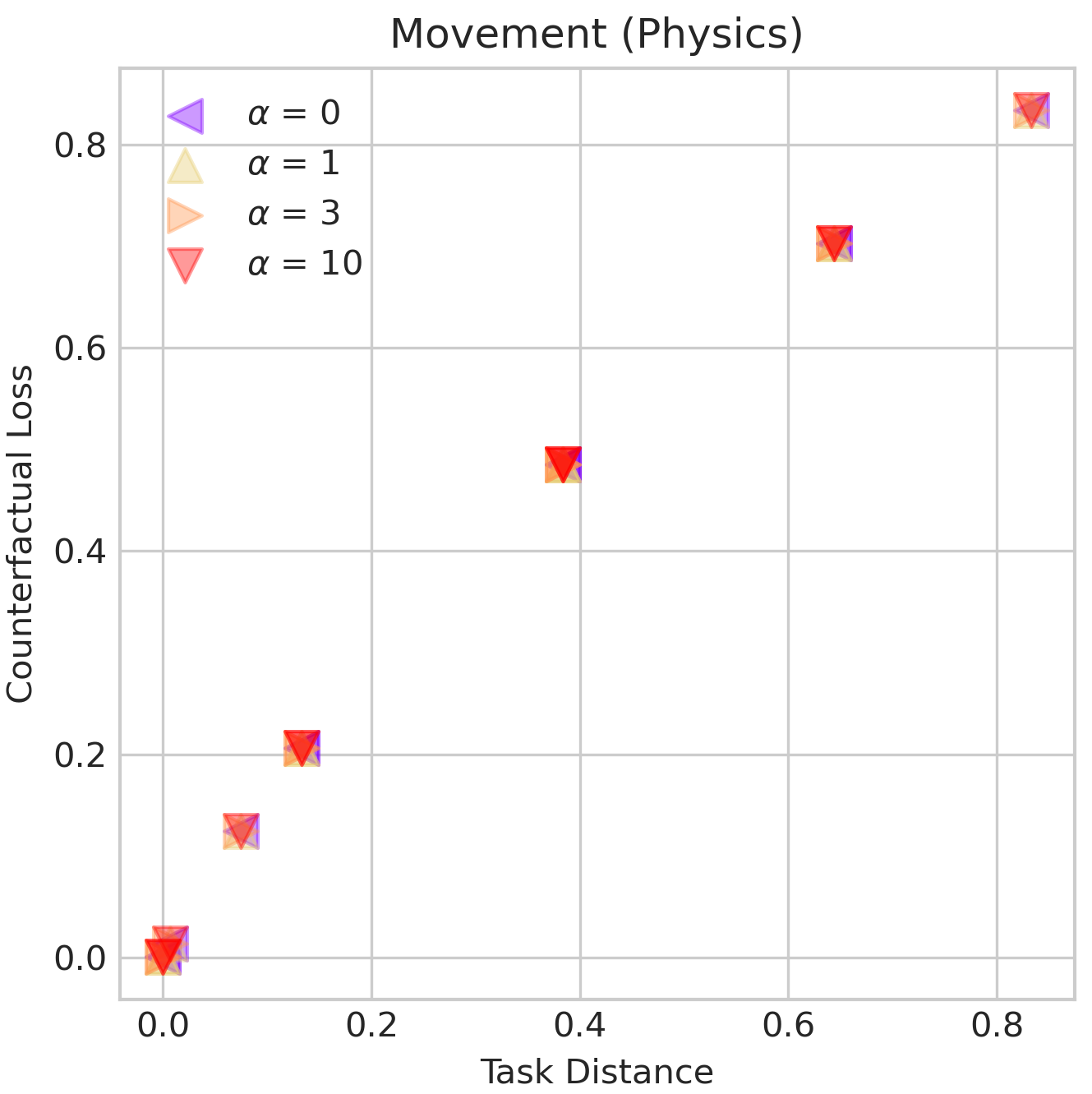}
         \label{fig:three sin x 2}
      \caption{CITA vs. Counterfactual Error on causal inference datasets. CITA strongly correlates with the (immeasurable) counterfactual loss.}
      \label{fig:td_cfe_plot}
\end{center}
\end{figure}

\subsection{CITA and The Counterfactual Loss}
\label{exp1}
 In this experiment, we empirically show the strong correlation between CITA (which only uses available data) and counterfactual loss (which is impossible to measure directly except for synthetic datasets). In Figure~\ref{fig:td_cfe_plot}, for different balancing weights $\alpha$ (see Equation~\ref{opt_obj}), we give the correlation between CITA and counterfactual error on the IHDP, RKHS, Movement(Physics), and Heat(Physics) datasets (for which counterfactuals are known). It is both intuitively appealing and empirically observed that CITA and the counterfactual loss have a strong correlation: the model of a source task has a smaller counterfactual loss on the target data if the target task is closer (in terms of CITA). 
Note that the points in Figure~\ref{fig:td_cfe_plot} for different values of $\alpha$ (i.e., balancing weight) are extremely close. In other words, CITA highly relates to the counterfactual loss and is robust to hyper-parameters shift. This property is desirable, especially in causal inference scenarios where no validation data can be accessed to cross-validate the hyper-parameters. Additionally, it can also be observed that CITA trends are robust to variations in the balancing weight for all datasets. 

\subsection{Comparison of Performance with/without Transfer Learning}\label{exp2}
This experiment aims to analyze the impact of transferring causal knowledge on the size of required training data. Here, we use Heat (Physics),  Movement (Physics), IHDP, and RKHS datasets for which the counterfactual outcomes are available. We first fix a target causal inference task. For a wide range of balancing weights ($\alpha$),  we record the values of $\varepsilon _{PEHE}$ for training the model from scratch while increasing the size of training datasets. In this process, the training datasets are slowly expanded such that smaller training sets are subsets of larger ones. We then report the minimum $\varepsilon_{PEHE}$ achieved for each dataset size. We identify the closest source task to the target task and repeat the above process with a small amount of target task data. We then compare the performance with and without transfer learning to quantify the amount of data needed by transfer learning models to achieve the best possible performance without transferring causal knowledge. The results are summarized in Table~\ref{performance_table}, which demonstrates that transferring causal knowledge decreases the required training data in this setting by $75\%$ to $95\%$. 

\section{Conclusion}
In this paper, we provided theoretical analysis proving the transferability of causal knowledge and outlined the underlying challenges. We also proposed a method for ITE transfer learning. Specifically, we constructed CITA, a new task affinity tailored for causal inference tasks, to measure the similarity of causal inference tasks that captures the symmetry within them. Given a new causal inference task, we transferred the ITE knowledge from the closest task between all the available previously trained tasks. Simulations on a representative family of datasets provide empirical evidence for the gains of our method and the efficacy of CITA. 

\subsubsection*{Acknowledgments}
This work was supported in part by the National Science Foundation (NSF) under the National AI Institute for Edge Computing Leveraging Next Generation Wireless Networks Grant \#  2112562.

\bibliography{aloui_189}

\onecolumn
\textbf{\huge{Appendix}}
\section{Causal Inference: An Example}
Let $X \in \mathcal{X}$ be the features (e.g., age, height, weight), the treatment assignment $A \in \{0, 1\}$ be the indicator representing if the subject received vaccine $0$ or $1$. The mortality outcome is denoted by $Y\in\mathcal{Y}$. 

The main challenge of causal inference arises from the absence of counterfactual observations. We do not observe the outcomes of individuals upon receiving treatment $1$ if they have received treatment $0$ and vice versa. The subjects who received vaccine $1$ may differ significantly from those who received treatment $0$. This issue is called selection bias. For instance, older people are more likely to receive the treatment than young people). Thus, estimating the counterfactual effects is challenging due to the unbalance between the treatment groups.

Let $\hat{f}(x,a)$ be a hypothesis modeling the outcome for an individual $x$ if he/she received treatment $a$. The factual loss is defined as follows:
\begin{equation}\epsilon_{F}(\hat{f}) = \int_{\mathcal{X}\times\{C,B\}\times\mathcal{Y}} l_{\hat{f}}(x,a,y)\; p(x,a,y) dx dady\end{equation}
By Bayes rule, we can write the factual loss as
\begin{equation*}
\begin{aligned}
&\epsilon_{F}(\hat{f}) \\
&= \int_{\mathcal{X}\times\mathcal{Y}} l_{\hat{f}}(x,a=0,y)\; p(x,y|A=0) p(A=0) dxdy +\\
&\int_{\mathcal{X}\times\mathcal{Y}}l_{\hat{f}}(x,a=1,y)\; p(x,y|A=1) p(A=1) dxdy \\
&= p(A=0) \int_{\mathcal{X}\times\mathcal{Y}} l_{\hat{f}}(x,a=0,y)\; p(x,y|A=0) dxdy +\\
&\left(1-p(A=0)\right)\int_{\mathcal{X}\times\mathcal{Y}}l_{\hat{f}}(x,a=1,y)\; p(x,y|A=1) dxdy \\
& = p(A=0) \epsilon_{F}^{A=0}(\hat{f}) + \left(1-p(A=0)\right) \epsilon_{F}^{A=0}(\hat{f})
\end{aligned}
\end{equation*}

We define the factual loss for the group who received vaccine $0$ as follows:

\begin{equation}
\epsilon_{F}^{A=0}(\hat{f}) = \int_{\mathcal{X}\times\mathcal{Y}} l_{\hat{f}}(x,a=0,y)\; p(x,y|A=0) dxdy
\end{equation}

Similarly, the factual loss for the group who received vaccine $1$ is described as: 

\begin{equation}
\epsilon_{F}^{A=1}(\hat{f}) = \int_{\mathcal{X}\times\mathcal{Y}} l_{\hat{f}}(x,a=1,y)\; p(x,y|A=1) dxdy
\end{equation}

Consider a parallel universe where the treatment assignments are flipped (i.e., those who received vaccine $1$ receive vaccine $0$ and vice versa). The performance of our hypothesis $\hat{f}$ in this scenario is the counterfactual loss, defined as follows:

\begin{equation}
\epsilon_{CF}(\hat{f}) = \int_{\mathcal{X} \times\{0,1\}  \times\mathcal{Y}} l_{\hat{f}}(x,a,y)\; p(x,1-a,y) dx da dy
\end{equation}


\section{Datasets and Experiments Descriptions}
\subsection{Datasets}
\label{datasets}
\paragraph{IHDP} 
The IHDP dataset was first introduced by \cite{hill} based on real covariates available from the Infant Health and Development Program (IHDP), studying the effect of development programs on children. The features in this dataset come from a Randomized Control Trial. The potential outcomes were simulated using Setting B. The dataset consists of $747$ individuals (e.g., $139$ in the treatment group and $608$ in the control group), each with $25$ features. The potential outcomes are generated as follows:
$$Y_{0} \sim \mathcal{N}(\exp(\beta^{T}\cdot(X + W)), 1)$$
and
$$Y_{1} \sim \mathcal{N}(\beta^{T}(X+W) - \omega,1)$$
where $W$ has the same dimension as $X$ with all entries equal $0.5$ and $\omega=4$. The regression coefficient $\beta$, a vector of length $25$, is randomly sampled from a categorical distribution with the support $(0, 0.1, 0.2, 0.3, 0.4)$ and the respective probabilities $\mu = (0.6, 0.1, 0.1, 0.1,0.1)$. The dataset generated according to these parameters is referred to as the \textit{base} dataset.

Additionally, we generate $9$ additional datasets by introducing $9$ new settings. These settings, which are constructed by varying $\mu$ and $\omega$, are shown in
Table~\ref{ihdp_table}. Each of these generated datasets consists of $747$ individuals (e.g., $139$ in the treatment group and $608$ in the control group).


\begin{table}[t]
\begin{center}
\caption{The settings to generate IHDP datasets}
\label{ihdp_table}
\begin{tabular}{l|lc}
\hline
\multicolumn{1}{l}{\bf Dataset} &\multicolumn{1}{c}{\bf $\mu$} &\multicolumn{1}{c}{\bf $\omega$} \\
\hline
IHDP (\textit{Base})&(0.6, 0.1, 0.1, 0.1, 0.1)&4\\
IHDP 1&(0.61, 0.09, 0.1, 0.1, 0.1) &4.1 \\
IHDP 2&(0.62, 0.08, 0.1, 0.1, 0.1) &4.2 \\
IHDP 3&(0.63, 0.07, 0.1, 0.1, 0.1) &4.3 \\
IHDP 4&(0.64, 0.06, 0.1, 0.1, 0.1) &4.4 \\
IHDP 5&(0.65, 0.05, 0.1, 0.1, 0.1) &4.5 \\
IHDP 6&(0.66, 0.04, 0.1, 0.1, 0.1) &4.6 \\
IHDP 7&(0.67, 0.03, 0.1, 0.1, 0.1) &4.7 \\
IHDP 8&(0.68, 0.02, 0.1, 0.1, 0.1) &4.8 \\
IHDP 9&(0.69, 0.01, 0.1, 0.1, 0.1) &4.9 \\
\hline
\end{tabular}
\end{center}
\end{table}

\paragraph{Jobs} The Jobs dataset~\citep{JobsDataset} consists of $619$ observations. In this experiment, the causal inference task aims to learn the effect of participation in a specific professional training program on landing a job in the following three years. Here, we generate a family of related datasets by randomly reverting the original treatment assignments (i.e., $0 \leftrightarrow 1$) with the probability $p \in \{0 = 0/9, 1/9, 2/9, 3/9, 4/9, 5/9, \cdots, 9/9=1\}$. The dataset corresponding to $p=0$ is considered the original dataset, and the dataset with $p=1$ has all treatment assignments reversed. We select the original Jobs dataset, introduced in~\citep{JobsDataset} as the \textit{base} dataset for our experiments.

\paragraph{Twins}
The Twins dataset~\cite{cevae} is based on the collected birthday data of twins born in the United States from 1989 to 1991. It is assumed that twins share significant parts of their features. Consider the scenario where one of the twins was born heavier than the other as the treatment assignment. The outcome is whether the baby died in infancy (i.e., mortality). Here, the twins are divided into two groups: the treatment and the control groups. The treatment group consists of heavier babies from the twins. On the other hand, the control group consists of lighter babies from the twins. All given observations from this dataset are considered factual.

We first construct a \textit{base} dataset by selecting a set of $2000$ pairs of twins from the original dataset~\citep{cevae}. Each individual is assigned to the treatment group according to a Bernoulli experiment with the probability of $q = 0.75$. In an analogous manner to that of the Jobs dataset, we generate a family of related datasets by randomly reverting the treatment assignments of the \textit{base} dataset (i.e., $0 \leftrightarrow 1$) with corresponding probabilities $p \in \{0, 0.1, 0.2, 0.3, 0.4, 0.5, \cdots, 1\} $. For instance, to generate dataset $i=1,2, \cdots, 11$, we revert the individual treatment assignments in the base dataset using the Bernoulli experiment with the probability of $p_i = (i-1)/10$. In particular, $p=0$ corresponds to the original dataset, while $p=1$ corresponds to all treatment assignments reverted.



\paragraph{RKHS}
In this experiment, we generate $100$ Reproducing Kernel Hilbert Space (RKHS) datasets, each having $2000$ data points. Next, we generate the treatment and the control populations $X_{1},X_{0} \in \mathbb{R}^{4}$ respectively from Gaussian distributions $\mathcal{N}(\mu_{1},I_{4})$ and $\mathcal{N}(\mu_{0},I_{4})$ for each dataset. We sample $\mu_{1} \in \mathbb{R}^{4}$ and $\mu_{0}\in \mathbb{R}^{4}$ respectively according to Gaussian distributions $\mathcal{N}(\pmb{e},I_{4})$ and $\mathcal{N}(-\pmb{e},I_{4})$ where $\pmb{e}=[1,1,1,1]^{T}$.

Subsequently, we generate the potential outcome functions $f_{0}$ and $f_{1}$ with a Radial Basis Function (RBF) kernel $K(\cdot,\cdot)$, described as follows:

Let $\gamma_{0},\gamma_{1}\in \mathbb{R}^{4}$ be two vectors sampled  from $\mathcal{N}(7\pmb{e},I_{4})$ and $\mathcal{N}(9\pmb{e},I_{4})$, respectively. Let $\lambda \in \mathbb{N}$ be sampled uniformly from $\{10,11,\ldots,99,100\}$. For $j \in \{0,1\}$:
\begin{enumerate}
    \item We sample $m_{j} \in \mathbb{N}$ according to the Poisson distribution with parameter $\lambda$ (i.e., $\text{Pois}$)
    \item For every $i\in\{1,\ldots,m_{j}\}$, we sample $x_{j}^i$ according to  $\mathcal{N}(\gamma_{j},I_{4}) $
    \item The potential outcome functions $f_j, j=0,1$ are constructed as $f_{j}(\cdot) = \sum_{i=1}^{m_j}K(x_{j}^i,\cdot)$
\end{enumerate}
Given the potential outcome functions $f_j, j \in \{0,1\}$, the corresponding potential outcomes $Y_0$ and $Y_1$ are generated by:
$$
Y_0(x)=f_0(x), \; \text{for every}\; x\in \mathbb{R}^{4},
$$
and
$$
Y_1(x)=f_1(x), \; \text{for every}\; x\in \mathbb{R}^{4}.
$$
We will refer to the first constructed dataset above as the \textit{base} dataset. Here, all the generated potential outcome functions are in the same RKHS.

\paragraph{Heat (Physics)} Consider a hot object left to cool off over time in a room with temperature $T(0)$. A person will likely suffer a burn if he/she touches the object at time $u$.

The causal inference task of interest is the effect of room temperature $T(0)$ on the probability of suffering a burn. This family consists of $20$ datasets; each includes $4000$ observations (e.g., $2000$ in the control group and $2000$ in the treatment group). The treatment in our setting is $a=1$ when $T(0) = 5$, and $a=0$ when $T(0) = 25$. The touching times of the treatment and control groups are sampled from two Chi-squared distributions $\chi^{2}(5)$ and $\chi^{2}(2)$, respectively, to introduce artificial bias. 

From the solution to Newton's Heat Equation~\citep{heat_equation}, the underlying causal structure is governed by the following equation:
$$T(u) = C \cdot\exp(-ku) + T(0)$$
where $T(u)$ is the temperature at time $u$ and $C, k$ are constants. Let $T_{0} = 25, C = 75$ for the control groups and $T_{0} = 5, C = 95$ for the treatment groups in the datasets. We choose $20$ values of $k= \{0.5, \cdots, 2\}$ uniformly spaced in $[0.5, 2]$. For each value of $k$, we generate a new dataset. The dataset corresponding to $k=0.5$ is referred to as the \textit{base} dataset.
 
Let $T^0(u)$ and $T^1(u)$ denote the temperature at time $u$ for the control and treatment groups, respectively. The potential outcomes $Y_0(u)$ and $Y_1(u)$ corresponding to the probability of suffering a burn at time $t$ for the control and treatment groups are described as follows:
$$
Y_j(u) = \max\left(\frac{1}{75}(T^j(u)-25),0\right)
$$

\paragraph{Movement (Physics)} Consider a free-falling object encountering air resistance. Opening the parachute can change the air resistance and control the descent velocity. The causal inference task of interest is the effect of the air resistance (e.g., with $a=1$ or without parachute $a=0$)  on the object's velocity at different times.

In this experiment, the family of datasets is generated, consisting of $12$ datasets. Each dataset includes $4000$ observations (e.g., $2000$ in the treatment group and $2000$ in the control group). The covariate is the time $u$. The outcome is the velocity at time $u$. The times of the treatment and control groups are sampled from two Chi-squared distributions $\chi^{2}(2)$ and $\chi^{2}(5)$, respectively, to create artificial bias. 

The underlying causal structure is governed by an ordinary differential equation (ODE) with the following analytical solution describing the velocity of a person at time $u$:
\begin{align}
\label{movement_equation}
    v(u) = \frac{g}{C}+(v(0)-\frac{g}{C})e^{-Cu}
\end{align}

where $g= 10$ is the earth's gravitational constant, $ C = k/m$, and $m, k$ are the mass and the air resistance constant, respectively. We assume that $v(0) = 0$ corresponds to a free-falling object without initial velocity.

For the control group, $m=k=C=1$ and the potential outcome is calculated as $Y_0(u)= v(u) = 10-e^{-u}$.  We use different sets of $(m,k)$ to generate the treatment groups for each dataset. The values of $(m,k)$ used in this experiment are as follows:
$(5,1)$, $(5,5)$, $(5,10)$, $(5,20)$, $(10,5)$, $(10,10)$, $(10,20)$, $(20,5)$, $(20,10)$, $(20,20)$, $(50,10)$, $(50,20)$.
The potential outcome function $Y_1 (u)$ is calculated from Equation~\ref{movement_equation} with the values of $m, k$ shown above.
We choose the dataset corresponding to $(m,k)=(5,1)$ as the \textit{base} dataset.

\subsubsection{Details of Experiments}
\label{exp_detail}

In this paper, we first create a number of causal inference tasks from the above families of datasets. For each family of datasets (e.g., IHDP, Jobs, Twins), the \textbf{base} task is created from its \textit{base} dataset. Similarly, we construct the other tasks from the remaining datasets in that family. In order to study the effects of transfer learning on causal inference, we define the source tasks and the target tasks as follows:
\begin{itemize}
    \item In the first experiment in Section ~\ref{exp1}, we choose the \textit{base} task to be the source task and the other tasks to be the target tasks.
    \item In the second experiment in Section ~\ref{exp2}, we choose the \textit{base} task to be the target task and the other tasks to be the source tasks.
\end{itemize}

\section{Proofs of Theorems}\label{proof}

\begin{theorem1}
Let $\hat{f}^{S}$ be a model trained on a source task, then
\begin{align*}
    \epsilon^{T}_{F}(\hat{f}^{S}) + u \epsilon^{T,a=0}_{CF}(\hat{f}^{S}) \leq \varepsilon_{PEHE}^{T}(\hat{f}^{S})
\end{align*}
where $u = p^{T}_{F}(a=1) $.
\end{theorem1}

\begin{proof}[\textbf{Proof of Theorem~\ref{thm:lb_epehe}}]
\label{pf:risk}

We have: 
\begin{align}\label{eq1}
\begin{aligned}
& \varepsilon_{PEHE}(\hat{f^{S}})\\ & = \int_{\mathcal{X}} \big[(\hat{f}^{S}(x,1)  - \hat{f}^{S}(x,0)) - (f^{T}(x,1) - f^{T}(x,0))\big]^{2}\\ & \quad p^{T}_F(x) dx \\
& = \int_{\mathcal{X}} \big[(\hat{f}^{S}(x,1)  - f^{T}(x,1)) - (f^{T}(x,0) - \hat{f}^{S}(x,0))\big]^{2} \\ & \quad p^{T}_F(x) dx \\
& = \int_{\mathcal{X}} (\hat{f}^{S}(x,1)  - f^{T}(x,1))^{2} p_{F}(x) dx \\
& \quad + \int_{\mathcal{X}} (\hat{f}^{S}(x,0)  - f^{T}(x,0))^{2} p^{T}_{F}(x) dx \\
& \quad - 2 \int_{\mathcal{X}} (\hat{f}^{S}(x,1)  - f^{T}(x,1))(f^{T}(x,0)- \hat{f}^{S}(x,0))\\
&\quad \; p^{T}_F(x) dx \\
\end{aligned}
\end{align}

First, we have the following properties of the factual and counterfactual distributions:
$$
\begin{aligned}
1. \; &\forall x \in \mathcal{X},\; p_F(x) = p_{CF}(x) \\
2. \; & \forall x \in \mathcal{X},\forall a \in \{0,1\},\; p_F(x,a) = p_{CF}(x,1-a)
\end{aligned}
$$

Applying these properties, the first term of Equation~(\ref{eq1}) can be expressed as:
$$
\begin{aligned}
& \int_{\mathcal{X}} (\hat{f}^{S}(x,0)  - f^{T}(x,0))^{2} p^{T}_{F}(x) dx \\
& =  u \int_{\mathcal{X}} (\hat{f}^{S}(x,0)  - f^{T}(x,0))^{2} p^{T}_{F}(x|a=1) dx \\
&\; + (1-u) \int_{\mathcal{X}} (\hat{f}^{S}(x,0)  - f^{T}(x,0))^{2} p^{T}_{F}(x|a=0) dx \\
& =  u \int_{\mathcal{X}} (\hat{f}^{S}(x,0)  - f^{T}(x,0))^{2} p^{T}_{CF}(x|a=0) dx \\
&\; + (1-u) \int_{\mathcal{X}} (\hat{f}^{S}(x,0)  - f^{T}(x,0))^{2} p^{T}_{F}(x|a=0) dx \\
& = u \epsilon^{T,a=0}_{CF}(\hat{f}^{S}) + (1-u)\; \epsilon^{T,a=0}_{F}(\hat{f}^{S}) 
\end{aligned}
$$
Similarly, the second term of Equation~(\ref{eq1}) can be expressed as:
$$
\begin{aligned}
& \int_{\mathcal{X}} (\hat{f}^{S}(x,1)  - f^{T}(x,1))^{2} p^{T}_{F}(x) dx \\
& =  (1-u) \epsilon^{T,a=1}_{CF}(\hat{f}^{S}) + u\; \epsilon^{T,a=1}_{F}(\hat{f}^{S})
\end{aligned}
$$

The potential outcome is independent given the features $Y_1 \ind Y_0 | X$ due to its unconfoundedness. Hence, the third term of Equation~(\ref{eq1}) can be expressed as:
$$
\begin{aligned}
    & \mathbb{E}\big[(\hat{f}^{S}(X,1)  - f^{T}(X,1))(f^{T}(X,0)- \hat{f}^{S}(X,0))\big] \\
    & = \mathbb{E}_x\Bigg[\mathbb{E}\Big[\hat{f}^{S}(x,1)  - Y^{T}_1)(Y^{T}_0- \hat{f}^{S}(x,0))|X=x\Big]\Bigg] \\
    & = 0
\end{aligned}
$$

The factual and counterfactual losses of the treatment and control groups  are positive. Thus, we have:

\begin{align*}
\begin{aligned}
    & u \epsilon^{T,a=1}_{F}(\hat{f}^{S}) + (1-u) \epsilon^{T,a=0}_{F}(\hat{f}^{S}) + u \epsilon^{T,a=0}_{CF}(\hat{f}^{S})\\
    & = \epsilon^{T}_{F}(\hat{f}^{S}) + u \epsilon^{T,a=0}_{CF}(\hat{f}^{S}) \\
    & \leq \varepsilon_{PEHE}^{T}(\hat{f}^{S})
\end{aligned}
\end{align*}
\end{proof}

\begin{theorem2}
For any hypothesis $\hat{f}$, we have:
\begin{align}
    \begin{aligned}
        \epsilon^{T}_{CF}(\hat{f}) \leq & \epsilon^{S}_{F}(\hat{f}) + 
        V(p^{T}_{F},p^{S}_{F}) + V(p^{T}_{F},p^{T}_{CF}) \\ & + \mathbb{E}_{p^{S}_{F}}[|f^{S}(x,t) - f^{T}(x,t)|]  
    \end{aligned}
\end{align}
and
\begin{align}
    \begin{aligned}
        \varepsilon^{T}_{PEHE}(\hat{f}) \leq & 4 \epsilon^{S}_{F}(\hat{f}) + 
        4 V( p^{T}_{F},p^{S}_{F}) + 2 V(  p^{T}_{F},p^{T}_{CF}) \\ & + 4 \mathbb{E}_{p^{S}_{F}}[|f^{S}(x,a) - f^{T}(x,a)|]  
    \end{aligned}
\end{align}
\end{theorem2}

\begin{proof}[\textbf{Proof of Theorem~\ref{thm:smiple_d1}}]
\label{pf:simple_td}
Adapting the first theorem in \cite{Ben-David2010} to our setting, we have the following two inequalities: 
$$
\epsilon^{T}_{CF}(\hat{f}) \leq \epsilon^{T}_{F}(\hat{f}) + V(p^{T}_{F},p^{T}_{CF})  
$$
and
$$
\epsilon^{T}_{F}(\hat{f}) \leq \epsilon^{S}_{F}(\hat{f}) + V(p^{T}_{F},p^{S}_{F}) + \mathbb{E}_{p^{S}_{F}}[|f^{S}(x,a) - f^{T}(x,a)|]   
$$
Therefore, we have:
$$
    \begin{aligned}
        \epsilon^{T}_{CF}(\hat{f}) \leq & \epsilon^{S}_{F}(\hat{f}) + 
        V(p^{T}_{F},p^{S}_{F}) + V(p^{T}_{F},p^{T}_{CF}) \\ & + \mathbb{E}_{p^{S}_{F}}[|f^{S}(x,a) - f^{T}(x,a)|]  
    \end{aligned}
$$
From \cite{shalit}, we have: 
$$
\varepsilon_{PEHE}^{T}(\hat{f}) \leq 2 \epsilon_F^{T}(\hat{f}) + 2 \epsilon_{CF}^{T}(\hat{f})
$$

Therefore, we have:
$$
\begin{aligned}
        \varepsilon^{T}_{PEHE}(\hat{f}) \leq & 4 \epsilon^{S}_{F}(\hat{f}) + 
        4 V( p^{T}_{F},p^{S}_{F}) + 2 V(  p^{T}_{F},p^{T}_{CF}) \\ & + 4 \mathbb{E}_{p^{S}_{F}}[|f^{S}(x,a) - f^{T}(x,a)|]  
    \end{aligned}
$$
\end{proof}

\begin{theorem3}
Suppose that the function class $G$ is stable under addition and multiplication and $\hat f, f^{T} \in G$, then
\begin{align}
    \begin{aligned}
        \epsilon^{T}_{CF}(\hat{f}) \leq & \epsilon^{S}_{F}(\hat{f}) + 
        \underset{G}{\text{IPM}}(p^{T}_{F},p^{S}_{F}) + \underset{G}{\text{IPM}}(p^{T}_{F},p^{T}_{CF}) \\ & + \mathbb{E}_{p^{S}_{F}}[|f^{S}(x,a) - f^{T}(x,a)|]  
    \end{aligned}
\end{align}
and 
\begin{align}
    \begin{aligned}
        \varepsilon^{T}_{PEHE}(\hat{f}) \leq & 4 \epsilon^{S}_{F}(\hat{f}) + 
        4 \underset{G}{\text{IPM}}( p^{T}_{F},p^{S}_{F}) + 2 \underset{G}{\text{IPM}}(  p^{T}_{F},p^{T}_{CF}) \\ & + 4 \mathbb{E}_{p^{S}_{F}}[|f^{S}(x,a) - f^{T}(x,a)|]  
    \end{aligned}
\end{align}
\end{theorem3}

\begin{proof}[\textbf{Proof of Theorem~\ref{thm:simple_ipm}}]
we have that:
\begin{equation*}
\begin{aligned}
\epsilon^{T}_{CF}(\hat{f}) \leq & \; \epsilon^{T}_{F}(\hat{f}) + \|\int (f^{T}(x,a)-\hat{f}(x,a))^2\\ & \; (p_F^{T}(x,a) - p_{CF}^{T}(x,a)) da dx  \| \\
 \leq & \epsilon^{T}_{F}(\hat{f}) + \underset{g\in G}{\sup}\|\int g(x,a)\\ & \quad \;(p_F^{T}(x,a) - p_{CF}^{T}(x,a)) da dx  \| 
\end{aligned}
\end{equation*}

Hence, we have:
$$
\begin{aligned}
\epsilon^{T}_{CF}(\hat{f}) \leq \epsilon^{T}_{F}(\hat{f}) + \underset{G}{\text{IPM}}(  p^{T}_{F},p^{T}_{CF})  
\end{aligned}
$$

Similarly, we have:
$$
\begin{aligned}
    & \epsilon^{T}_{F}(\hat{f})\\& \leq \epsilon^{S}_{F}(\hat{f}) + \mathbb{E}_{p^{S}_{F}}[|f^{S}(x,a) - f^{T}(x,a)|] \\ 
    & + \|\int (f^{S}(x,a)-\hat{f}(x,a))^2(p_F^{S}(x,a) - p_{F}^{S}(x,a)) da dx  \|  \\
    & \leq \epsilon^{T}_{F}(\hat{f}) + \mathbb{E}_{p^{S}_{F}}[|f^{S}(x,a) - f^{T}(x,a)|] + \underset{G}{\text{IPM}}( p^{T}_{F},p^{S}_{F})
\end{aligned}
$$

Thus, we have:
$$
\begin{aligned}
&\epsilon^{T}_{F}(\hat{f}) \\
&\leq \epsilon^{S}_{F}(\hat{f}) + \mathbb{E}_{p^{S}_{F}}[|f^{S}(x,a) - f^{T}(x,a)|] + \underset{G}{\text{IPM}}( p^{T}_{F},p^{S}_{F})
\end{aligned}
$$
Therefore, we have:
$$
    \begin{aligned}
        \epsilon^{T}_{CF}(\hat{f}) \leq & \epsilon^{S}_{F}(\hat{f}) + 
        \underset{G}{\text{IPM}}(p^{T}_{F},p^{S}_{F}) + \underset{G}{\text{IPM}}(p^{T}_{F},p^{T}_{CF}) \\ & + \mathbb{E}_{p^{S}_{F}}[|f^{S}(x,a) - f^{T}(x,a)|]  
    \end{aligned}
$$
From \cite{shalit}, we have: 
$$
\varepsilon_{PEHE}^{T}(\hat{f}) \leq 2 \epsilon_F^{T}(\hat{f}) + 2 \epsilon_{CF}^{T}(\hat{f})
$$

Therefore, we have:
$$
\begin{aligned}
        \varepsilon^{T}_{PEHE}(\hat{f}) \leq & 4 \epsilon^{S}_{F}(\hat{f}) + 
        4 \underset{G}{\text{IPM}}( p^{T}_{F},p^{S}_{F}) + 2 \underset{G}{\text{IPM}}(  p^{T}_{F},p^{T}_{CF}) \\ & + 4 \mathbb{E}_{p^{S}_{F}}[|f^{S}(x,a) - f^{T}(x,a)|]  
    \end{aligned}
$$
\end{proof}

Next, we will use the following results from ~\cite{shalit} for causal inference. For $x\in \mathcal{X}, a\in\{0,1\}$, with notation simplicity, we define:
$$
L_{\Phi,h}^{T}(x,a) = \int_{Y}l_{\Phi,h}(x,a,y)P(Y^{T}_a = y|x)dy.
$$

\begin{theorem}[Bounding The Counterfactual Loss]
\label{bounding_cfl}
Let $\Phi$ be an invertible representation with inverse $\Psi$. 
Let $p_{\Phi}^{a=i} = p_{\phi}(r|a=i),a\in\{0,1\}$
Let $h: \mathcal{R} \times\{0,1\} \rightarrow \mathcal{Y}$ be a hypothesis. Assume that for $a=0,1$, the function $r\mapsto L_{\Phi,h}(\Psi(r), a) \in G$ then:
\begin{align}
\begin{aligned}
&\epsilon_{C F}(\Phi,h) \leq \\
&(1-u) \epsilon_{F}^{a=1}(\Phi,h)+a \epsilon_{F}^{a=0}(\Phi,h)+ \\
& \underset{G}{\text{IPM}}\left(p_{\Phi}^{a=1}, p_{\Phi}^{a=0}\right).
\end{aligned}
\end{align}
\end{theorem}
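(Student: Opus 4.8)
The plan is to reduce the counterfactual population loss to a convex combination of two \emph{group-wise} counterfactual losses and then control each of them by the corresponding \emph{factual} group-wise loss plus one IPM term over the representation space. The structural fact I would exploit is that in the counterfactual world the treatment proportions are swapped, i.e.\ $p_{CF}(A=1)=p_F(A=0)=1-u$ and $p_{CF}(x\mid A=1)=p_F(x\mid A=0)$, whereas unconfoundedness gives $p_F(Y=y\mid x,A=a)=P(Y_a=y\mid x)$; consequently $\epsilon_{CF}^{a=1}$ and $\epsilon_F^{a=1}$ are integrals of the \emph{same} outcome-averaged pointwise risk $x\mapsto L_{\Phi,h}(x,1)=\int_{\mathcal Y}\ell_{\Phi,h}(x,1,y)\,P(Y_1=y\mid x)\,dy$ against \emph{different} covariate laws (and symmetrically for $a=0$).

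First I would condition $\epsilon_{CF}(\Phi,h)$ on the counterfactual treatment indicator, obtaining $\epsilon_{CF}(\Phi,h)=(1-u)\,\epsilon_{CF}^{a=1}(\Phi,h)+u\,\epsilon_{CF}^{a=0}(\Phi,h)$. Next I would record the identities $\epsilon_{CF}^{a=1}(\Phi,h)=\int_{\mathcal X}L_{\Phi,h}(x,1)\,p_F(x\mid A=0)\,dx$ and $\epsilon_{F}^{a=1}(\Phi,h)=\int_{\mathcal X}L_{\Phi,h}(x,1)\,p_F(x\mid A=1)\,dx$, together with the analogous pair for $a=0$; these follow from the definitions of the group-wise factual and counterfactual losses, unconfoundedness, and the factual/counterfactual distribution relations recalled above.

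The core step is to subtract the two identities,
\[
\epsilon_{CF}^{a=1}(\Phi,h)-\epsilon_{F}^{a=1}(\Phi,h)=\int_{\mathcal X}L_{\Phi,h}(x,1)\bigl(p_F(x\mid A=0)-p_F(x\mid A=1)\bigr)\,dx,
\]
and then change variables $r=\Phi(x)$, $x=\Psi(r)$, which is legitimate since $\Phi$ is invertible on its image; the covariate laws push forward to $p_\Phi^{a=0}$ and $p_\Phi^{a=1}$ on $\mathcal R$, so the difference equals $\int_{\mathcal R}g(r)\bigl(p_\Phi^{a=0}(r)-p_\Phi^{a=1}(r)\bigr)\,dr$ with $g(r)=L_{\Phi,h}(\Psi(r),1)$. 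By the standing hypothesis $g\in G$, this is at most $\sup_{g\in G}\bigl|\int_{\mathcal R}g\,(p_\Phi^{a=0}-p_\Phi^{a=1})\bigr|=\underset{G}{\text{IPM}}(p_\Phi^{a=1},p_\Phi^{a=0})$, the IPM being symmetric in its arguments and defined through an absolute value so that it dominates the signed difference. The same bound holds for the $a=0$ group. Plugging into the decomposition yields $\epsilon_{CF}(\Phi,h)\le (1-u)\bigl(\epsilon_F^{a=1}(\Phi,h)+\underset{G}{\text{IPM}}(p_\Phi^{a=1},p_\Phi^{a=0})\bigr)+u\bigl(\epsilon_F^{a=0}(\Phi,h)+\underset{G}{\text{IPM}}(p_\Phi^{a=1},p_\Phi^{a=0})\bigr)$, and since the two weights sum to one the IPM terms coalesce into a single term, which is exactly the claimed bound.

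The step I expect to be the main obstacle is making the change of variables fully rigorous: one needs $\Phi$ to be not merely injective but (say) a differentiable bijection onto its image with differentiable inverse, or at least a bi-measurable isomorphism, so that the pushforward identities and the change-of-variables formula hold and the Jacobian factors arising on the two sides cancel — this is precisely why the hypothesis is phrased as ``$r\mapsto L_{\Phi,h}(\Psi(r),a)\in G$'' rather than as a condition on $x\mapsto L_{\Phi,h}(x,a)$ itself. Everything else is bookkeeping with the factual/counterfactual distributional identities and the definition of the IPM.
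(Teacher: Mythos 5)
Your proof is correct and follows essentially the same route as the source of this result (it is Lemma~1 of \citet{shalit}, which the paper imports without reproducing a proof): conditioning on the counterfactual treatment indicator with swapped proportions, writing both group-wise losses as integrals of the same pointwise risk $L_{\Phi,h}(x,a)$ against the two covariate laws, and bounding the difference by the IPM after the change of variables $r=\Phi(x)$ — exactly the technique the paper itself uses in its proof of Lemma~\ref{lemma1}. Your derivation also makes clear that the ``$a\,\epsilon_F^{a=0}$'' in the stated bound is a typo for $u\,\epsilon_F^{a=0}$ with $u=p_F(A=1)$, and your concern about the change of variables is minor: defining $p_\Phi^{a}$ as the pushforward measure of $p_F(\cdot\mid a)$ under $\Phi$ makes that step rigorous with only measurability of $\Phi$ and $\Psi$.
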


\begin{theorem}[Bounding the $\epsilon_{PEHE}$]
\label{bounding_epehe}
The Expected Precision in Estimating Heterogeneous Treatment Effect $\epsilon_{PEHE}$ satisfies
\begin{align}
\begin{aligned}
&\varepsilon_{PEHE}(\Phi,h) \\
& \leq 2\left(\epsilon_{C F}(\Phi,h)+\epsilon_F(\Phi,h)\right)   \\
&\leq 2\left(\epsilon_F^{a=0}(\Phi,h)+\epsilon_F^{a=1}(\Phi,h)+\underset{G}{\text{IPM}}\left(p_{\Phi}^{a=1}, p_{\Phi}^{a=0}\right)\right)
\end{aligned}
\end{align}
\end{theorem}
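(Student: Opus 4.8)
The plan is to prove the two inequalities in turn: the first is a self-contained bias decomposition, and the second is immediate from Theorem~\ref{bounding_cfl}. Throughout write $m_a(x) := f(x,a) = \mathbb{E}[Y_a \mid X=x]$, $\hat f(x,a) := h(\Phi(x),a)$, $\delta_a(x) := \hat f(x,a) - m_a(x)$, and $u := p_F(A=1)$. For the first inequality $\varepsilon_{PEHE}(\Phi,h) \le 2(\epsilon_{CF}(\Phi,h) + \epsilon_F(\Phi,h))$, I would note that $\hat\tau(x) - \tau(x) = \delta_1(x) - \delta_0(x)$ and apply $(\delta_1-\delta_0)^2 \le 2\delta_1^2 + 2\delta_0^2$, giving $\varepsilon_{PEHE}(\Phi,h) \le 2\sum_{a\in\{0,1\}} \int_{\mathcal X} \delta_a(x)^2\, p_F(x)\,dx$. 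Each pointwise square is dominated by the expected squared outcome loss via the bias--variance identity $\int_{\mathcal Y}(y-\hat f(x,a))^2\, P(Y_a=y\mid x)\,dy = \delta_a(x)^2 + \mathrm{Var}(Y_a\mid x) \ge \delta_a(x)^2$, so it remains to bound $\int_{\mathcal X} \delta_a(x)^2\, p_F(x)\,dx$ by group-wise factual and counterfactual losses.

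The crux is the change of measure in that last integral. I would split $p_F(x) = u\, p_F(x|a=1) + (1-u)\, p_F(x|a=0)$ and use the identities $p_F(x) = p_{CF}(x)$ and $p_F(x,a) = p_{CF}(x,1-a)$ (already exploited in the proof of Theorem~\ref{thm:lb_epehe}), together with conditional unconfoundedness, which makes $P(Y_a=y\mid x)$ the outcome law governing group $a$ in both the factual and the counterfactual worlds. For $a=1$ this turns the own-group contribution into $\epsilon_F^{a=1}$ and the other-group contribution $\int \delta_1(x)^2\, p_F(x|a=0)\,dx = \int \delta_1(x)^2\, p_{CF}(x|a=1)\,dx$ into $\epsilon_{CF}^{a=1}$, yielding $\int \delta_1^2\, p_F(x)\,dx \le u\,\epsilon_F^{a=1} + (1-u)\,\epsilon_{CF}^{a=1}$, and symmetrically $\int \delta_0^2\, p_F(x)\,dx \le (1-u)\,\epsilon_F^{a=0} + u\,\epsilon_{CF}^{a=0}$. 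Regrouping via $\epsilon_F = u\,\epsilon_F^{a=1} + (1-u)\,\epsilon_F^{a=0}$ and $\epsilon_{CF} = (1-u)\,\epsilon_{CF}^{a=1} + u\,\epsilon_{CF}^{a=0}$ (the latter since $p_{CF}(A=1) = p_F(A=0) = 1-u$) then collapses the bound to exactly $2\epsilon_F + 2\epsilon_{CF}$.

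For the second inequality I would simply invoke Theorem~\ref{bounding_cfl}, whose hypothesis ($r \mapsto L_{\Phi,h}(\Psi(r),a) \in G$) is in force, to get $\epsilon_{CF}(\Phi,h) \le (1-u)\,\epsilon_F^{a=1}(\Phi,h) + u\,\epsilon_F^{a=0}(\Phi,h) + \underset{G}{\text{IPM}}(p_{\Phi}^{a=1}, p_{\Phi}^{a=0})$, and add $\epsilon_F(\Phi,h) = u\,\epsilon_F^{a=1}(\Phi,h) + (1-u)\,\epsilon_F^{a=0}(\Phi,h)$; the convex weights sum to $1$ on each of $\epsilon_F^{a=1}$ and $\epsilon_F^{a=0}$, so $\epsilon_{CF}+\epsilon_F \le \epsilon_F^{a=0}+\epsilon_F^{a=1}+\underset{G}{\text{IPM}}(p_{\Phi}^{a=1}, p_{\Phi}^{a=0})$, and multiplying by $2$ closes the chain. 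I expect the only real obstacle to lie in the middle step of the first inequality: correctly matching each ``other-group'' integral to its counterfactual loss through $p_F(x,a) = p_{CF}(x,1-a)$ and tracking the weights $u$ and $1-u$ so that nothing is lost beyond the harmless discard of the nonnegative variance terms. The remaining manipulations are routine; indeed this PEHE bound is essentially the one established in~\cite{shalit}, so the argument may alternatively be given by reference.
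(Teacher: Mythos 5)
Your proposal is correct, and it is essentially the paper's route: the paper does not prove this theorem at all but imports it verbatim from \cite{shalit}, and your derivation — $(\delta_1-\delta_0)^2\le 2\delta_1^2+2\delta_0^2$, the change of measure via $p_F(x,a)=p_{CF}(x,1-a)$ under unconfoundedness with the harmless discard of the conditional-variance terms, and then Theorem~\ref{bounding_cfl} combined with $\epsilon_F=u\,\epsilon_F^{a=1}+(1-u)\,\epsilon_F^{a=0}$ for the second inequality — is exactly the standard argument of that reference (for the squared loss, as required). The only point worth noting is that you implicitly corrected the typo in the paper's statement of Theorem~\ref{bounding_cfl}, where the coefficient printed as ``$a$'' on $\epsilon_F^{a=0}$ should indeed be $u=p_F(A=1)$, which is the reading your weight bookkeeping (and the original result in \cite{shalit}) requires.
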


In the next section, the performance of target task $\epsilon_F^{T,a=0}(\Phi,h)$ is related to that of a source task $\epsilon_F^{S,a=0}(\Phi,h)$. Without loss of generality, we present the proof for the case when $a=0$. 

First, we make the following assumptions:
\begin{itemize}
    \item \textbf{A1}: $\Phi$ is injective (Thus, $\Psi = \Phi^{-1}$ exists on $\text{Im}(\Phi)$).
    \item \textbf{A2}: There exists a real function space $G$ on $\text{Im}(\Phi)$ such that the function $r \mapsto \ell^{T}_{\Phi,h}(\Psi(r), a,y) \in G$.
    \item \textbf{A3}: There exists a function class $G'$ on $\mathcal{Y}$ such that $y\mapsto \ell_{\Phi,h}(x,a,y) \in G'$. 
\end{itemize}
The measure of the fundamental difference between two causal inference tasks is defined as follows:
\begin{equation*}\label{transferability_assumption}
    \gamma^* = \mathbb{E}_{x \sim P(X^S)}\left[\underset{G'}{\text{IPM}}(P(Y_a^{S}|x), P(Y_a^{T}|x))\right]
\end{equation*}

\begin{lemma}
\label{lemma1}
Suppose that Assumptions 1-3 hold. The factual losses of any model $(\Phi,h)$ on source and target task satisfy for every $a \in \{0,1\}$
\begin{equation*}
    \begin{aligned}
        &\epsilon_{F}^{T,a}(\Phi,h) \le \\&\epsilon_{F}^{S,a}(\Phi,h) +  \underset{G}{\text{IPM}}(P(\Phi(X_a^{T})),P(\Phi(X_a^{S}))) + \gamma^{*} 
    \end{aligned}
\end{equation*}
\end{lemma}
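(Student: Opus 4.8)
The plan is to bound the target factual loss for fixed treatment group $a$ by first rewriting it as an integral over the representation space, then splitting off the contribution of the source representation distribution, and finally controlling the residual by the IPM term and $\gamma^*$. Concretely, I would start from the definition
\[
\epsilon_F^{T,a}(\Phi,h) = \int_{\mathcal{X}} L^{T}_{\Phi,h}(x,a)\, p^{T}_F(x\mid a)\, dx,
\]
and push this forward through $\Phi$: since $\Phi$ is injective by \textbf{A1}, writing $r = \Phi(x)$ and $x = \Psi(r)$ we get
\[
\epsilon_F^{T,a}(\Phi,h) = \int_{\mathrm{Im}(\Phi)} L^{T}_{\Phi,h}(\Psi(r),a)\, P(\Phi(X_a^{T}))(r)\, dr.
\]
The first key step is to add and subtract the same integrand against $P(\Phi(X_a^{S}))$, giving
\[
\epsilon_F^{T,a}(\Phi,h) = \int L^{T}_{\Phi,h}(\Psi(r),a)\, P(\Phi(X_a^{S}))(r)\, dr + \int L^{T}_{\Phi,h}(\Psi(r),a)\,\bigl(P(\Phi(X_a^{T}))(r) - P(\Phi(X_a^{S}))(r)\bigr)\, dr.
\]
By Assumption \textbf{A2}, the function $r \mapsto L^{T}_{\Phi,h}(\Psi(r),a)$ lies in $G$, so the second integral is bounded in absolute value by $\mathrm{IPM}_G(P(\Phi(X_a^{T})), P(\Phi(X_a^{S})))$ directly from the definition of the IPM (Equation~\ref{ipm}).

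The second key step handles the first integral, which equals $\int_{\mathcal{X}} L^{T}_{\Phi,h}(x,a)\, p^{S}_F(x\mid a)\, dx$ after changing variables back — i.e., the target loss function evaluated on the source covariate distribution. I would compare this to $\epsilon_F^{S,a}(\Phi,h) = \int_{\mathcal{X}} L^{S}_{\Phi,h}(x,a)\, p^{S}_F(x\mid a)\, dx$; the difference is
\[
\int_{\mathcal{X}} \bigl(L^{T}_{\Phi,h}(x,a) - L^{S}_{\Phi,h}(x,a)\bigr)\, p^{S}_F(x\mid a)\, dx = \int_{\mathcal{X}} \left(\int_{\mathcal{Y}} \ell_{\Phi,h}(x,a,y)\bigl(P(Y^{T}_a = y\mid x) - P(Y^{S}_a = y\mid x)\bigr)\,dy\right) p^{S}_F(x\mid a)\, dx,
\]
using the definition $L_{\Phi,h}(x,a) = \int_{\mathcal{Y}} \ell_{\Phi,h}(x,a,y) P(Y_a = y\mid x)\,dy$. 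By Assumption \textbf{A3}, $y \mapsto \ell_{\Phi,h}(x,a,y) \in G'$, so the inner integral is bounded by $\mathrm{IPM}_{G'}(P(Y^S_a\mid x), P(Y^T_a\mid x))$ for each $x$; integrating against $p^S_F(x\mid a)$ and taking absolute values inside yields $\gamma^*$ (or a version with conditioning on $a$ — I would absorb this into the definition, noting the statement writes $x \sim p^S_F$). Assembling the three pieces gives the claimed inequality.

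The main obstacle I anticipate is being careful about the measure-theoretic bookkeeping when pushing forward through $\Phi$ — in particular ensuring the change of variables is valid (injectivity from \textbf{A1} suffices to define the pushforward, but one should note the integrals are all over $\mathrm{Im}(\Phi)$ and that $\Psi$ is only defined there) — and reconciling the conditioning: the lemma as stated conditions all quantities on the treatment group $a$, yet $\gamma^*$ is written with $x \sim p^S_F$ rather than $x \sim p^S_F(\cdot\mid a)$. I would either interpret $\gamma^*$ with the appropriate conditioning throughout or remark that, since the proof is presented for fixed $a$ (the text says ``without loss of generality, we present the proof for the case when $a=0$''), the conditional version is what is actually used and the unconditional $\gamma^*$ dominates it up to the mixing weights. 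A secondary subtlety is the sign/absolute-value handling: the IPM gives a two-sided bound, so moving it to the right-hand side with a $+$ sign is legitimate, but I would state this explicitly rather than silently dropping absolute values.
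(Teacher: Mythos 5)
Your proposal is correct and follows essentially the same argument as the paper: the same add-and-subtract decomposition into a representation-distribution-shift term bounded via \textbf{A1}--\textbf{A2} and the IPM over $G$, and an outcome-distribution term bounded pointwise via \textbf{A3} by the IPM over $G'$ and then integrated to give $\gamma^*$ (the paper merely splits before changing variables, while you change variables first — an immaterial reordering). Your remark about the conditioning in $\gamma^*$ ($x \sim p^S_F$ versus $x \sim p^S_F(\cdot \mid a)$) correctly identifies a notational imprecision that is present in the paper's own proof as well.
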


\begin{proof}[\textbf{Proof of Lemma~\ref{lemma1}}] 
\begin{align*}
\begin{aligned}
&\epsilon_{F}^{T,a=0}(\Phi,h) - \epsilon_{F}^{S,a=0}(\Phi,h) \\
& = \int_{\mathcal{X}} L_{\Phi,h}^{T}(x,0)P(X_0^{T}=x)-L_{\Phi,h}^{S}(x,0)P(X_0^{S}=x)dx \\
& = \int_{\mathcal{X}} L_{\Phi,h}^{T}(x,0)P(X_0^{T}=x)-L_{\Phi,h}^{T}(x,0)P(X_0^{S}=x)\\ & + L_{\Phi,h}^{T}(x,0)P(X_0^{S}=x) - L_{\Phi,h}^{S}(x,0)P(X_0^{S}=x)dx \\
& = \underbrace{\int_{\mathcal{X}} L_{\Phi,h}^{T}(x,0)P(X_0^{T}=x)-L_{\Phi,h}^{T}(x,0)P(X_{0}^{S}=x)dx}_{\mathlarger{\Gamma}}\\
&+\underbrace{\int_{\mathcal{X}}\left(L_{\Phi,h}^{T}(x,0)-L_{\Phi,h}^{S}(x,0)\right)P(X_0^{S}=x)dx}_{\mathlarger{\Theta}}
\end{aligned}
\end{align*}

To bound $\Theta$, we use the following inequality:
\begin{align*}
\begin{aligned}
& L_{\Phi,h}^{T}(x,t)-L_{\Phi,h}^{S}(x,t) \\
&= \int_{Y}\ell_{\Phi,h}(x,a,y)\left(P(Y^{T}_a = y|x)-P(Y^{S}_a = y|x)\right)dy \\
&\le \max_{f \in G'}\Bigg|\int_{Y}f(y)P(Y^{T}_a = y|x)-P(Y^{S}_a = y|x)dy\Bigg| \\
&= \underset{G'}{\text{IPM}}\big(P(Y^{T}_a = y|x), P(Y^{S}_a = y|x)\big) 
\end{aligned}
\end{align*}

From the above inequality, we have:
\begin{align*}
\begin{aligned}
\Theta &= \int_{\mathcal{X}}\left(L_{\Phi,h}^{T}(x,0)-L_{\Phi,h}^{S}(x,0)\right)P(X_0^{S}=x)dx \\
& \leq \mathbb{E}_{x \sim P(X^S)}\left[\underset{G'}{\text{IPM}}(P(Y_a^{S}|x), P(Y_a^{T}|x))\right]\\
& = \gamma^{*}
\end{aligned}
\end{align*}

To bound $\Gamma$, we use the change of variable formula:
$$
\begin{aligned}
 \Gamma  &= \int_{\mathcal{X}} L_{\Phi,h}^{T}(x,0)P(X_0^{T}=x) - \\
& \quad L_{\Phi,h}^{T}(x,0)P(X_0^{S}=x)dx \\
&= \int_{\mathcal{R}} L_{\Phi,h}^{T}\big(\Psi(r),0\big)P\big(\Phi(X^{T}_{0})=r\big) - \\
& \quad L_{\Phi,h}^{T}\big(\Psi(r),0\big)P\big(\Phi(X^{S}_{0}) =r\big)dr\\
&\le \max_{g\in G} \Bigg|\int g(r) \Big(P\big(\Phi(X^{T}_{0})=r\big)-\\
& \quad P\big(\Phi(X^{S}_{0}) =r\big)\Big)dr\Bigg|\\
&= \underset{G}{\text{IPM}}\Big(P\big(\Phi(X^{T}_0)\big),P\big(\Phi(X^{S}_0\big)\Big)
\end{aligned}
$$
Combining the above upper bounds for $\Gamma$ and $\Theta$, we have:
$$
\begin{aligned}
&\epsilon_{F}^{T,a=0}(\Phi,h) - \epsilon_{F}^{S,a=0}(\Phi,h) \\
& \le  \underset{G}{\text{IPM}}\Big(P\big(\Phi(X^{T}_0)\big),P\big(\Phi(X^{S}_0)\big)\Big) + \gamma^{*} 
\end{aligned}
$$

Thus, we conclude that:
\begin{equation*}
\begin{aligned}
&\epsilon_{F}^{T,a=0}(\Phi,h)\\
&\le \epsilon_{F}^{S,a=0}(\Phi,h) + \underset{G}{\text{IPM}}\Big(P\big(\Phi(X^{T}_0)\big),P\big(\Phi(X^{S}_0)\big)\Big) + \gamma^{*}
\end{aligned}
\end{equation*}
\end{proof}

\begin{lemma2}
Suppose that Assumptions A1, A2, A3 hold. Then the counterfactual loss of any model $(\Phi,h)$ on the target task satisfy:
\begin{equation*}
    \begin{aligned}
        \epsilon_{CF}^{T}(\Phi,h) \le &\epsilon_F^{S,a=1}(\Phi,h) + \epsilon_F^{S,a=0}(\Phi,h)\\ 
                               & + \underset{G}{\text{IPM}}(P(\Phi(X_1^{T})),P(\Phi(X_1^{S}))) \\
                               & + \underset{G}{\text{IPM}}(P(\Phi(X_0^{T})),P(\Phi(X_0^{S}))) \\
                               & + \underset{G}{\text{IPM}}(P(\Phi(X_0^{T})),P(\Phi(X_1^{T})))+2\gamma^*\\
    \end{aligned}
\end{equation*}
where 
\begin{equation}
    \gamma^* = \underset{{x \sim P(X^S)}}{\mathbb{E}}\left[\underset{G'}{\text{IPM}}(P(Y_a^{S}|x), P(Y_a^{T}|x))\right]
\end{equation}
measures the fundamental difference between two causal inference tasks.
\end{lemma2}

\begin{proof}[\textbf{Proof of Lemma~\ref{lemma2}}] 
Theorem \ref{bounding_cfl} is applied to establish an upper bound for the counterfactual loss of the target task. Subsequently, we apply Lemma~\ref{lemma1}.

$$
\begin{aligned}
&\epsilon^{T}_{CF}(\Phi,h) \\
& \leq \epsilon_{F}^{T,a=1}(\Phi,h)+\epsilon_{F}^{T, a=0}(\Phi,h)+ \underset{G}{\text{IPM}}\big(\Phi(X^{T}_0), \Phi(X^{T}_1)\big)
\end{aligned}
$$
Therefore,
\begin{equation*}
\begin{aligned}
\epsilon^{T}_{CF}(\Phi,h)&\leq \epsilon_F^{S,a=1}(\Phi,h) + \epsilon_F^{S,a=0}(\Phi,h) +2\gamma^{*} \\ & 
+\underset{G}{\text{IPM}} \Big(P\big(\Phi(X_1^{T})\big), P\big(\Phi(X_1^{S})\big)\Big) \\
&+ \underset{G}{\text{IPM}}\Big(P\big(\Phi(X_0^{T})\big),P\big(\Phi(X_0^{S})\big)\Big) \\
&+ \underset{G}{\text{IPM}}\Big(P\big(\Phi(X_0^{T})\big),P\big(\Phi(X_1^{T})\big)\Big)\\
\end{aligned}
\end{equation*}
\end{proof}

\begin{theorem5}{(Transferability of Causal Knowledge)} 
Suppose that Assumptions A1, A2, A3 hold. The performance of source model on target task, i.e. $\varepsilon^{T}_{PEHE}(\Phi,h)$, is upper bounded by:
\begin{equation*}
\begin{aligned}
    \varepsilon^{T}_{PEHE}(\Phi, h) \le &2(\epsilon_F^{S,a=1}(\Phi,h) + \epsilon_F^{S,a=0}(\Phi,h)\\ 
    &+\underset{G}{\text{IPM}}(P(\Phi(X_1^{T})),P(\Phi(X_1^{S})))\\
    & + \underset{G}{\text{IPM}}(P(\Phi(X_0^{T})),P(\Phi(X_0^{S}))) \\
    & + \underset{G}{\text{IPM}}(P(\Phi(X_0^{T})),P(\Phi(X_1^{T}))+2\gamma^*) 
\end{aligned}
\end{equation*}
\end{theorem5}

\begin{proof}[\textbf{Proof of Theorem~\ref{main_theorem_1}}]
By applying Theorem \ref{bounding_epehe}, we get 
\begin{align*}
\begin{aligned}
&\varepsilon^{T}_{P E H E}(\Phi,h) \\
&\leq 
2\Big(\epsilon_F^{T,a=0}(\Phi,h) + \epsilon_F^{T,a=1}(\Phi,h) \\&+ \underset{G}{\text{IPM}}\left(P\left(\Phi(X^{T}_0)\right), P\left(\Phi(X^{T}_1)\right)\right)\Big) \\
\end{aligned}
\end{align*}
After applying Lemma~\ref{lemma1} to the first and second terms of the above equation, we have:
\begin{equation*}
\begin{aligned}
    \varepsilon^{T}_{PEHE}(\Phi, h) \le & \; 2 \; (\epsilon_F^{S,a=1}(\Phi,h) + \epsilon_F^{S,a=0}(\Phi,h)\\ 
    &+\underset{G}{\text{IPM}}(P(\Phi(X_1^{T})),P(\Phi(X_1^{S})))\\
    & + \underset{G}{\text{IPM}}(P(\Phi(X_0^{T})),P(\Phi(X_0^{S}))) \\
    & + \underset{G}{\text{IPM}}(P(\Phi(X_0^{T})),P(\Phi(X_1^{T}))+2\gamma^*) 
\end{aligned}
\end{equation*}
\end{proof}

\section{Baseline: Data Bundling} 
In many causal inference scenarios, we only have access to the trained model, and the corresponding data is unavailable. This situation could be the case in medical applications due to privacy reasons. Consequently, bundling the datasets of source tasks with the target task is not feasible. In contrast, the data may be available for some specific applications. In this case, we create another baseline referred to as data bundling.

In data bundling, we create the bundled dataset by combining the datasets of source tasks and the target task. Here, we compare our approach with data bundling for the IHDP and the Movement(Physics) datasets. For data bundling, we report the model's best performance (i.e., $\varepsilon_{P E H E}$) achieved by hyper-parameter search. For our approach, we only report the model's performance with the lowest training error. This setup gives more advantage to the data bundling baseline. The results are illustrated in Figure~\ref{fig:data_bundling}. Even with the aforementioned advantage, the data bundling method achieves poorer performance than our approach. This is due to data imbalance, lack of precision in determining similarity from propensity score, and \textbf{differences in outcome functions}.

\begin{figure}[t]
\centering
    \centering
    \includegraphics[width=0.45\textwidth]{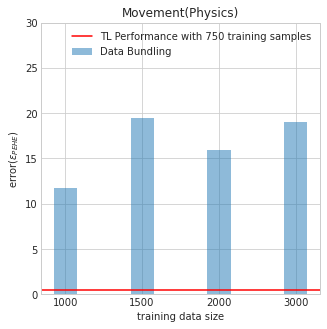}
    \centering
    \includegraphics[width=0.45\textwidth]{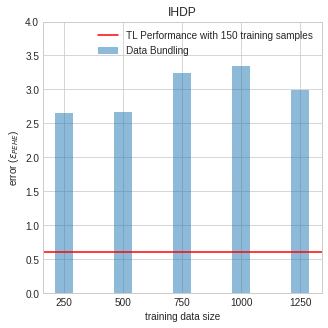}
    \caption{Performance comparison between data bundling and our approach. Our approach (red horizontal line) significantly outperforms data bundling. An increase in the size of training data doesn't improve the performance of data bundling.}
    \label{fig:data_bundling}
\end{figure}

\section{Causal Inference Task Affinity}
\label{epsnn}
Let $\mathcal{P}_{N_{\theta}}(T, D^{te})\in [0,1]$ be a function that measures the performance of a given model $N_{\theta}$ parameterized by $\theta\in\mathbb{R}^d$ on the test set $D^{te}$ of the causal task $T$.

\begin{definition}[$\varepsilon$-approximation Network]
A model $N_{\theta}$ is called an $\varepsilon$-approximation network for a task-dataset pair $(T,D)$ if it is trained using the training data $D^{tr}$ such that $\mathcal{P}_{N_{\theta}}(T, D^{te}) \geq 1 - \varepsilon$, for a given $0 < \varepsilon < 1$. 
\end{definition}

\begin{definition}[Fisher Information Matrix]
For a neural network $N_{\theta_{s}}$ with weights $\theta_{s}$ trained on data $D_{s}$, a given test dataset $D_t$ and the negative log-likelihood loss function $L(\theta,D)$, the Fisher Information matrix is defined as:
\begin{align}
    F_{s,t} &=\mathbb{E}_{D\sim D_t}\Big[\nabla_{\theta} L(\theta_{s},D)\nabla_{\theta} L(\theta_{s},D)^T\Big] \\
    &= -\mathbb{E}_{D\sim D_{t}}\Big[\mathbf{H}\big(L(\theta_{s},D)\big)\Big],
\end{align}
\end{definition} 
where $\mathbf{H}$ is the Hessian matrix, i.e., $\mathbf{H}\big(L(\theta,D)\big)= \nabla_{\theta}^2L(\theta,D)$, and expectation is taken w.r.t the data. It is proven that the Fisher Information Matrix is asymptotically well-defined \citep{9766163}.
In practice, we approximate the above with the empirical Fisher Information matrix:
\begin{align}\label{emprical_fisher}
    \hat{F}_{s,t} = \frac{1}{|D_{t}|}\sum_{x\in D_{t}} \nabla_{\theta} L(\theta_{s},x)\nabla_{\theta} L(\theta_{s},x)^T.
\end{align}
Here, the empirical Fisher Information Matrix is positive semi-definite because it is the summation of positive semi-definite terms, regardless of the number of samples.


\subsection{Task Affinity Between Counterfactual Tasks}
\label{task_cf}
In the following section, we denote the task-dataset pair $a=(T_{a}, D_{a})$  by $a_{F}=(T_{a_{F}}, D_{a_{F}})$ where $D_{a_F}$ is sampled from the factual distribution. Similarly, $a_{CF} = (T_{a_{CF}}, D_{a_{CF}})$ denotes the counterfactual task-dataset pair, where $D_{a_{CF}}$ is sampled from   the counterfactual distribution. We refer to $(T_{a_{F}}, D_{a_{F}})$ and $(T_{a_{CF}}, D_{a_{CF}})$ as the corresponding factual and counterfactual tasks. 

The following theorem proves that the order of proximity of tasks is preserved even if we observe the counterfactual tasks instead. In other words, a task, which is more similar to the target task when measured using factual data, remains more similar to the target task even when measured using counterfactual data. 


\begin{theorem}
\label{counterctual_order}
Let $\mathbb{T}$ be the set of tasks and
let $a_{F} = (T_{a_{F}},D_{a_{F}})$, $b_{F} = (T_{b_{F}},D_{b_{F}})$, and $c_{F} = (T_{c_{F}},D_{c_{F}})$ be three factual tasks and  $a_{CF} = (T_{a_{CF}},D_{a_{CF}})$, $b_{CF} = (T_{b_{CF}},D_{b_{CF}})$, and $c_{CF} = (T_{c_{CF}},D_{c_{CF}})$ their corresponding counterfactual tasks. 

Suppose that there exists a class of neural networks (well-trained causal inference neural networks) 
$\mathcal{N} = \{N_{\theta}\}_{\theta \in \Theta}$ for which:
\begin{align}
    \forall a,b,c \in \mathbb{T}, \: d[a,b] \leq d[a,c] + d[c,b]
\end{align}
and the task affinity between the factual and the counterfactual can be arbitrarily small, described as follows:
\begin{align}    
    \forall \epsilon>0, \exists N_{\theta} \in \mathcal{N}, \; d[a_{F},a_{CF}]<\epsilon
\end{align}

We have the following result:
\begin{align}
    d[a_{F},b_{F}]\leq d[a_{F},c_{F}] \implies d[a_{CF},b_{CF}] \leq d[a_{CF},c_{CF}]
\end{align}
\end{theorem}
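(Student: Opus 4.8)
The plan is to treat $d[\cdot,\cdot]$ as a ``near-metric'' on the set of tasks and exploit the two hypotheses—the triangle inequality and the fact that each factual task is arbitrarily close to its counterfactual counterpart—to push an inequality between $a_F,b_F,c_F$ over to an inequality between $a_{CF},b_{CF},c_{CF}$. First I would fix $\epsilon>0$ and, using the second hypothesis, select a network $N_\theta\in\mathcal{N}$ for which all three distances $d[a_F,a_{CF}]$, $d[b_F,b_{CF}]$, $d[c_F,c_{CF}]$ are simultaneously smaller than $\epsilon$ (applying the hypothesis to each of $a,b,c$ and, if the statement only gives one task at a time, taking a common refinement or simply noting the hypothesis is meant to hold for the same architecture class). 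Then by repeated use of the triangle inequality I would write
\begin{align*}
d[a_{CF},b_{CF}] &\le d[a_{CF},a_F] + d[a_F,b_F] + d[b_F,b_{CF}] \\
&< d[a_F,b_F] + 2\epsilon,
\end{align*}
and symmetrically
\begin{align*}
d[a_F,c_F] &\le d[a_F,a_{CF}] + d[a_{CF},c_{CF}] + d[c_{CF},c_F] \\
&< d[a_{CF},c_{CF}] + 2\epsilon.
\end{align*}

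Chaining these two displays with the hypothesis $d[a_F,b_F]\le d[a_F,c_F]$ gives
\[
d[a_{CF},b_{CF}] < d[a_F,b_F] + 2\epsilon \le d[a_F,c_F] + 2\epsilon < d[a_{CF},c_{CF}] + 4\epsilon.
\]
Since $\epsilon>0$ was arbitrary, letting $\epsilon\to 0$ yields $d[a_{CF},b_{CF}] \le d[a_{CF},c_{CF}]$, which is the desired conclusion. The only subtlety in taking the limit is that the network $N_\theta$ (and hence the precise numerical values of the counterfactual distances) may change with $\epsilon$; I would address this by noting that the quantities $d[a_{CF},b_{CF}]$ and $d[a_{CF},c_{CF}]$ appearing in the conclusion are the ``true'' task-affinity values, and the $\epsilon$-dependent bound shows their difference is $\le 4\epsilon$ for every $\epsilon$, forcing the clean inequality. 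Alternatively one can phrase the whole argument with a fixed sufficiently accurate $N_\theta$ and an error term that is then absorbed.

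The main obstacle is the logical bookkeeping around the second hypothesis: it is stated as ``for all $\epsilon$ there exists $N_\theta$ with $d[a_F,a_{CF}]<\epsilon$,'' and the proof needs this simultaneously for $a$, $b$, and $c$ (and ideally for a single common network so that the triangle inequality can be applied to concrete values on that network). I expect the cleanest route is to read the hypothesis as a statement about the approximating class $\mathcal{N}$ uniformly over tasks—i.e. one can pick $N_\theta$ making all of $d[a_F,a_{CF}],d[b_F,b_{CF}],d[c_F,c_{CF}]$ small at once—and then the rest is just two triangle-inequality chains and a limit, as above. If instead one must juggle three different networks, the argument still goes through because each triangle-inequality step only ever compares distances measured on one fixed network, and the final $\epsilon\to 0$ collapse is unaffected.
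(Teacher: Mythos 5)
Your argument is correct and is essentially the paper's own proof: the same two applications of the triangle inequality through the factual/counterfactual pairs, the same use of $d[a_F,b_F]\le d[a_F,c_F]$ in the middle, the same $4\epsilon$ slack, and the same limit $\epsilon\to 0$. Your remarks on applying the closeness hypothesis simultaneously to $a$, $b$, and $c$ just make explicit what the paper's proof assumes implicitly, so there is no substantive difference in approach.
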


\begin{proof}[\textbf{Proof of Theorem ~\ref{counterctual_order}}]
Suppose $d[a_{F},b_{F}]\leq d[a_{F},c_{F}]$. For every $\epsilon >0$, we have:
\begin{align*}
\begin{aligned}
d[a_{CF},b_{CF}] & \leq d[a_{CF},a_{F}] + d[a_{F},b_{F}] + d[b_{F},b_{CF}]\\
             & \leq \epsilon + d[a_{F},c_{F}] + \epsilon\\
             & \leq d[a_{F},a_{CF}] + d[a_{CF},c_{CF}] + d[c_{F},c_{CF}] \\ & \; +2\epsilon \\
             &\leq d[a_{CF},c_{CF}] + 4\epsilon
\end{aligned}
\end{align*}
Therefore,
$d[a_{CF},b_{CF}]\leq d[a_{CF},c_{CF}]$ as $\epsilon \to 0$.
\end{proof}

\end{document}